\def\secref#1{section~\ref{#1}}
\def\eqref#1{equation~\ref{#1}}
\def\1{\bm{1}}
\def\eps{{\epsilon}}
\DeclareMathAlphabet{\mathsfit}{\encodingdefault}{\sfdefault}{m}{sl}
\SetMathAlphabet{\mathsfit}{bold}{\encodingdefault}{\sfdefault}{bx}{n}
\newcommand{\alphabar}{\bar{\alpha}}
\newcommand{\xhat}{\hat{x}}
\newcommand{\cyanbox}[1]{%
    \begin{tcolorbox}[colback=cyan!15, colframe=gray!30, arc=0pt, boxrule=0.5pt, left=-3pt, right=0pt, top=1pt, bottom=1pt]
    #1
    \end{tcolorbox}
}
\newtheorem{lemma}{Lemma}
\newtheorem{proposition}{Proposition}
\newcommand{\firstcolor}{\cellcolor{green!20}}
\newcommand{\seccolor}{\cellcolor{cyan!15}}
\newcommand{\firstcolorlegend}{\colorbox{green!20}{\rule[0ex]{0pt}{0.9ex}\hspace{2.5ex}}}
\newcommand{\seccolorlegend}{\colorbox{cyan!20}{\rule[0ex]{0pt}{0.9ex}\hspace{2.5ex}}}
\algnewcommand{\LineComment}[1]{\Statex #1}
\title{Steer Away from Mode Collisions: Improving Composition in Diffusion Models}
\author{%
    Debottam Dutta \thanks{Corresponding author. Email: dd24@illinois.edu} \quad 
    Jianchong Chen \quad 
    Rajalaxmi Rajagopalan \\
    \textbf{Yu-Lin Wei \quad
    Romit Roy Choudhury} \\
    University of Illinois at Urbana-Champaign \\
}
\newcommand{\name}{\textbf{CO3}}
\newcommand{\checkmarkgreen}{\textcolor{green!60!black}{\checkmark}}
\newcommand{\crossred}{\textcolor{red}{\ding{55}}}
\newcommand*\circled[1]{\tikz[baseline=(char.base)]{%
    \node[shape=circle,fill=black,draw,inner sep=1pt,text=white] (char) {#1};}}
\begin{document}

\maketitle

\begin{abstract}
We propose to improve multi‑concept prompt fidelity in text‑to‑image diffusion models. 
We begin with common failure cases---prompts like “a cat and a dog” that sometimes yields images where one concept is missing, faint, or colliding awkwardly with another. 
We hypothesize that this happens when the diffusion model drifts into mixed modes that over‑emphasize a single concept it learned strongly during training. 
% , while the others are weakened. 
Instead of re-training, we introduce a \textit{corrective sampling strategy} that steers away from regions where the joint prompt behavior overlaps too strongly with any single concept in the prompt.
The goal is to steer towards ``pure'' joint modes where all concepts can coexist with balanced visual presence.
% , steering generation toward “pure” joint modes where all concepts can coexist with balanced visual presence. 
We further show that existing multi‑concept guidance schemes can operate in unstable weight regimes that amplify imbalance; we characterize favorable regions and adapt sampling to remain within them. 
Our approach, {\name}, is plug‑and‑play, requires no model tuning, and complements standard classifier‑free guidance. 
% \hl{Experiments on diverse multi‑concept prompts show consistent gains in concept coverage, relative prominence balance, and robustness, reducing dropped or distorted concepts compared to standard baselines and prior compositional methods.} 
Experiments on diverse multi-concept prompts indicate improvements in concept coverage, balance and robustness, with fewer dropped or distorted concepts compared to standard baselines and prior compositional methods.
Results suggest that lightweight corrective guidance can substantially mitigate brittle semantic alignment behavior in modern diffusion systems.
Code is available at \url{https://github.com/debottam-dutta7/co3}
\end{abstract}

\vspace{-0.15in}

\section{Introduction}
\label{sec:intro}
Recent diffusion models \citep{ddpm,clip-diffusion,sd} have ushered significant breakthroughs in Text-to-Image (T2I)
synthesis, producing high-fidelity images from textual descriptions. 
However, ensuring the generated images faithfully adhere to the prompt, a challenge known as semantic alignment \citep{ae,tome,dividebind}, remains a problem.
Concretely, for a given prompt $C$, T2I models like StableDiffusion \citep{sd} sample from the modes (or high probability regions) of the learned distribution, $p(x \mid C)$.
While such models can produce high resolution images in general, every so often, the results are surprisingly misaligned even for very simple prompts containing few concepts, e.g., $C$=``a cat and a dog''.
Diagnosing exactly why this behavior emerges periodically is difficult.
It is conceivable that the complex training process in high dimensions, especially in conjunction with text embeddings, creates some problematic modes in $p(x \mid C)$.

We hypothesize that problematic modes in $p(x \mid C)$ arise when they overlap with modes of individual concept distribution $p(x \mid c_i)$. 
% When $C$=``a cat and a dog'', $c_1$=``a cat'' and $c_2$=``a dog''.
Such an overlap biases the generation toward a single concept $c_i$, reducing the prominence of others.
For instance, across images of $c_1=\text{``cat''}$ in the training dataset, a few may have an inconspicuous or partial $c_2=\text{``dog''}$ in the background. 
This image may still fall under the mode of $p(x \mid C)$.
We attribute this to training instabilities and relatively less coverage of multi-concept prompts $C$, which cause the model to assign high probability even to weakly conforming images.
Said differently, an image of a big cat and an inconspicuous dog can get assigned high probabilities under $p(x \mid C)$, causing semantic misalignment.

Preventing such problematic modes warrants strict and specialized training paradigms; a difficult task for such large models.
However, ``curing'' them after their occurrence is a more viable approach
Assuming our hypothesis is true, we propose a cure for problematic modes. 
Our intuitive idea is to go away from problematic modes and move towards modes under which none of the individual concepts are strong.
To realize this, we propose to design a \textit{corrector} that generates samples from the following distribution: 
\begin{equation}
    \tilde{p}(x \mid C) \;\propto\; \frac{p(x \mid C)}{\prod_i p(x \mid c_i)}.
\end{equation}
Figure~\ref{fig:teaser_figure} illustrates the intuition behind our proposal.
Our corrector distribution $\tilde{p}(x \mid C)$ assigns low probability to regions where $p(x \mid C)$ overlaps with individual $p(x \mid c_i)$; we deem them as degenerate modes dominated by a single concept.
By suppressing these overlaps, the corrector emphasizes \emph{pure} $p(x\mid C)$ modes where all concepts coexist without one overwhelming the others. 
From a probabilistic perspective, this acts as a corrective factor: while $p(x\mid C)$ may assign high probability to weakly conforming images due to training noise or limited multi-concept data, dividing by the marginals removes this bias and sharpens the distribution toward genuine multi-concept samples. 
As a result, the modes we target are more semantically aligned and less prone to concept suppression or distortion.

Correction sampling from $\tilde{p}(x\mid C)$ can be achieved by composing scores from constituent component distributions $\nabla_x \log p(x\mid C), \{\nabla_x \log p(x\mid c_i)\}_{i}$ \citep{compose-diff}. 
While there are many ways to compose, we analyse and show that composition through weighted sum of Tweedie-means---in the Tweedie denoised space---offers a more general framework that subsumes existing approaches.

In particular, we show that two classes of correction sampling---noise-resampling and latent correction~\citep{separate-enhance,syngen,tome, tweediemix}---become special cases of Tweedie-mean composition under different weighting schemes. 
This allows us to design a hybrid composition framework that serves the purpose of resampling at time $T$, and then toggles to latent correction at later steps.
By latent, we mean that the correction steps are accomplished in between each DDIM time step, allowing {\name}(Concept Contrasting Corrector)  to be a plug-and-play, model-agnostic and gradient-free approach for T2I models.
Comparison of {\name} against SOTA baselines shows stronger semantic alignment to prompts (measured using multiple metrics), giving empirical evidence of our hypothesis.
% !TEX root = ../main.tex
% The teaser figure.
\begin{figure*} %[htbp]
    \centering
    \includegraphics[width=\textwidth]{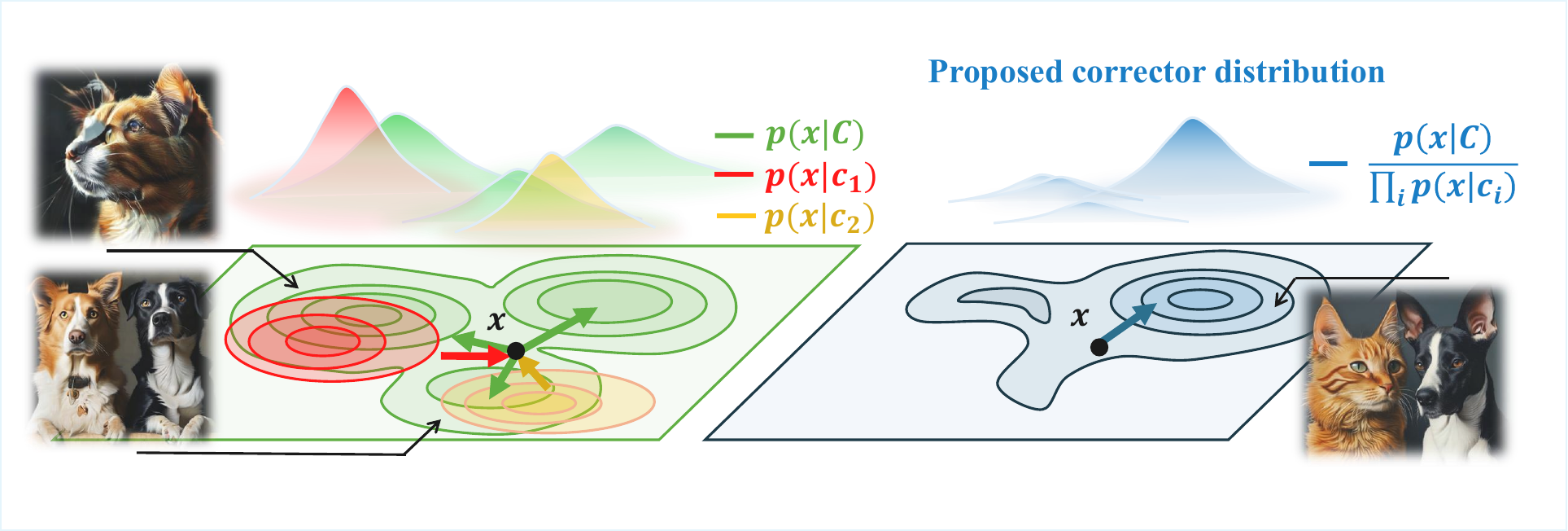} 
    \vspace{-0.15in}
    \caption{The figure illustrates our hypothesis on mode overlap using a simple 2D toy example. (a) Two modes of the distribution $p(x \mid \texttt{"a cat and a dog"})$ (in \textcolor{Green}{green} contour) has significant overlap with the modes of the individual concept distributions $p(x \mid\texttt{"a cat"})$ (in \textcolor{red}{red} contour) and $p_t(x \mid \texttt{"a dog"})$ (in \textcolor{orange}{orange} contour). (b) The proposed (unnormalized) corrector distribution $p(x \mid \texttt{"a cat and a dog"}) / (p(x \mid \texttt{"a cat"}) p(x \mid \texttt{"a dog"}))$ suppresses these overlaps, steering the generation away from problematic modes. The arrows indicate the denoising directions.}
    \vspace{-0.1in}
    \label{fig:teaser_figure}
\end{figure*}

\section{Background}
\label{sec:background}
%-----------------------------------------------------------------------
%                          text starts here
%-----------------------------------------------------------------------
$\blacksquare$ \textbf{Conditional generation using Classifier-Free Guidance (CFG).} 
\label{sec:conditional_guidance}
In diffusion-based Text-to-Image (T2I) generation \citep{sd,saharia2022photorealistic,ramesh2022hierarchical}, given the noisy latent $x_t$ at timestep $t$, a denoised estimate can be derived using Tweedie’s formula:
\begin{equation}
    \hat{x}_0 = \frac{x_t - \sqrt{1-\alphabar_t}\,\epsilon_\theta(x_t, c, t)}{\sqrt{\alphabar_t}},
\end{equation}
where $\epsilon_\theta$ denotes the predicted noise conditioned on the text prompt $c$, and $\alphabar_t$ is the cumulative product of the noising schedule. This step corresponds to the \emph{denoising} stage, recovering an estimate of the clean signal $x_0$.
In the DDIM sampler \citep{ddim}, under the noise-free condition, the subsequent step deterministically evolves $\hat{x}_0$ to $x_{t-1}$ without introducing additional stochasticity:
\begin{equation}
    x_{t-1} = \sqrt{\alphabar_{t-1}}\,\hat{x}_0 + \sqrt{1-\alphabar_{t-1}}\,\epsilon_\theta(x_t, c, t).
\end{equation}
Here, the same predicted noise $\epsilon_\theta$ is reused, eliminating the renoisification step present in stochastic samplers such as DDPM\citep{ddpm}.
This perspective highlights how the Tweedie decomposition denoising via $\hat{x}_0$, followed by deterministic reconstruction of $x_{t-1}$, is naturally aligned with DDIM sampling. 
% Moreover, CFG \citep{cfg} differs by applying guidance directly in the Tweedie estimate $\hat{x}_0$ before computing $x_{t-1}$.

In practice, most T2I models adopt \emph{classifier-free guidance} (CFG) \citep{cfg}, where predictions from both the conditional and unconditional models are combined:
\begin{equation}
    \epsilon_t^{\lambda, c} \;=\; \lambda\ \epsilon_\theta(x_t, c, t) \;+\; (1-\lambda) \,\epsilon_\theta(x_t, \varnothing, t)
    \label{eq:cfg}
\end{equation}
where $\lambda > 1$ controls the guidance strength. $\epsilon_t^{\lambda, c}$ is the composed noise prediction under CFG. Then the denoising and DDIM steps proceed as before, but using $\epsilon_t^{\lambda, c}$ in place of $\epsilon_\theta(x_t, c, t)$.

\paragraph{Tweedie View with CFG.}
Substituting $\epsilon_t^{\lambda, c}$ into the Tweedie denoising and DDIM update yields:
\begin{align}
    \hat{x}_0^{\lambda, c} &= \frac{x_t - \sqrt{1-\alphabar_t}\,\epsilon_t^{\lambda, c}}{\sqrt{\alphabar_t}} = \frac{1}{\sqrt{\alphabar_t}} \hat{x}_{tweedie}[\epsilon^{\lambda,c}_t], \label{eq:cfg_tweedie} \\
    x_{t-1} &= \frac{\sqrt{\alphabar_{t-1}}}{\sqrt{\alphabar_t}}\,\hat{x}_{tweedie}[\epsilon^{\lambda,c}_t]  + \sqrt{1-\alphabar_{t-1}}\,\epsilon_t^{\lambda, c} \label{eq:ddim_cfg}
\end{align}
where $\hat{x}_{tweedie}[\epsilon^{\lambda,c}_t]:=x_t - \sqrt{1-\alphabar_t}\,\epsilon_t^{\lambda, c}$ is the Tweedie mean from the CFG noise at $t$. Thus, CFG in the Tweedie framework can be interpreted as modifying the denoised estimate $\hat{x}_0$ to $\hat{x}_0^{\lambda, c}$ before the deterministic step to $x_{t-1}$ \citep{ddps,kwon2023diffusionbased}.

%----------------------------------------------------------------
$\blacksquare$ \textbf{Correction-based approaches for conditional generation.}
\label{sec:correction_based}
A number of recent works have addressed the challenge of compositional text-to-image generation using correction-based approaches \citep{ae,dividebind,separate-enhance,syngen,tome}. During sampling with classifier-free guidance, these methods iteratively correct the latent variable by applying gradient updates of the form
\begin{equation}
    x^{k+1}_t = x^k_t - s \,\nabla_{x_t}\mathcal{L}(x_t, c), \quad k = 1, 2, \ldots, M-1.
\end{equation}
and then use the final refined latent $x^M_t$ to predict the next DDIM step $x_{t-1}$, using Eqs. \ref{eq:cfg_tweedie} and \ref{eq:ddim_cfg}.
Here $\mathcal{L}(x_t, c)$ is a task-specific loss function that enforces better alignment with the target condition $c$, and $s$ is a step size.
This iterative update can be interpreted as correcting the diffusion guidance process using a corrector distribution of the form:
\begin{equation}
    \tilde{p}_t(x_t, c) \propto \exp\bigl(-\mathcal{L}(x_t, c)\bigr),
\end{equation}
which refines the generative process at each timestep.
In the special case where the step size $s = \sigma^2_t$  $=1-\alphabar_t$ (the noise scale at timestep $t$), the update rule becomes equivalent to iterative Tweedie-mean correction (see Eq.~\ref{eq:cfg_tweedie} above), where $x_t^{k+1}$ is reinterpreted as the next estimated Tweedie mean given $x_t^k$ for the distribution $\tilde{p}_t(x_t, c)$.

%----------------------------------------------------------

$\blacksquare$ \textbf{Composable-diffusion.}
Generating samples that satisfy multiple conditions $\{c_1, \ldots, c_K\}$
can be formulated as sampling from the joint distribution
%If one wants to generate samples that satisfy multiple conditions $\{c_1, \ldots, c_K\}$, this can be formulated as sampling from the joint distribution
\begin{equation}
    \tilde{p}_0(x_0 \mid c_1, \ldots, c_K) \;\propto\; p(x_0) \prod_{k=1}^K p_0(c_k \mid x_0).
\end{equation}
To achieve this, \citet{compose-diff} proposed \emph{composable diffusion}, which directly composes the output scores (predicted noises) from different conditional diffusion models using CFG during sampling.

Specifically, in the text-to-image setting, if the prompt $C$ can be decomposed into constituent concepts $\{c_1, c_2, \ldots, c_K\}$, the outputs of the corresponding diffusion chains can be combined as
\begin{equation}
    \tilde{\epsilon}^{\lambda, C}_t \;=\; \epsilon_t^{\phi} + \lambda_1 \bigl(\epsilon_t^{c_1} - \epsilon_t^{\phi}\bigr) + \lambda_2 \bigl(\epsilon_t^{c_2} - \epsilon_t^{\phi}\bigr) + \dots + \lambda_K \bigl(\epsilon_t^{c_K} - \epsilon_t^{\phi}\bigr) 
    \label{eq:compose-diff}
\end{equation}
where $\epsilon_t^{\phi}$ denotes the unconditional score, and $\lambda_k$ controls the guidance strength for concept $c_k$. The next sample is then predicted via the Tweedie formulation:
\begin{equation}
    x_{t-1} = \frac{\sqrt{\bar{\alpha}_{t-1}}}{\sqrt{\bar{\alpha}_t}} \,\hat{x}_{\text{tweedie}}\!\left[\tilde{\epsilon}^{\lambda, C}_t\right] \;+\; \sqrt{1-\bar{\alpha}_{t-1}}\,\tilde{\epsilon}^{\lambda, C}_t.
\end{equation}

Although this approach is model-agnostic and conceptually simple, its performance is often poor, since the above linear composition of scores is incorrect or doesn't correspond to the score of the diffusion forward distribution $\tilde{p}_t(x_t \mid c_1,\ldots, c_K)$ at any timestep $t>0$ \citep{rrr}. 

In summary, both the correction-based approach and the composable diffusion idea can be interpreted as different ways of approximating Tweedie-means $\hat{x}_{tweedie}[\epsilon^{\lambda,c}_t]$ at time $t$. 

% \section{{\name}: Contrasting Concepts to Compose}
\section{{\name}: Concept Contrasting Corrector}
\label{sec:co3}

We aim to combine the strengths of correction-based approaches and composable diffusion. 
On one hand, correction-based methods are powerful and explicitly improve compositional alignment, 
but they are subject to the complex gradient of the base model. 
On the other hand, composable diffusion is fully model-agnostic, but suffers from poor performance because linear score composition is not consistent with the diffusion forward process~\citet{rrr}. 

To take advantage of both, assume that prompt $C$ can be decomposed into constituent concepts $\{c_1, c_2, \ldots, c_K\}$. 
We propose an explicit \textit{Concept Contrasting} Corrector ({\name}) distribution based on our hypothesis on mode overlap discussed in Sec. \ref{sec:intro}. Specifically, we define the corrector distribution at each timestep $t$ as:
\begin{equation}
    \tilde{p}_t(x_t, C) \propto \frac{p_t(x_t \mid C)^{w_0}}{\prod_{k=1}^K p_t(x_t \mid c_k)^{w_k}},
    \label{eq:corrector_dist}
\end{equation}
where $\{w_0, w_1, \ldots, w_K\}$ are composition weights.
As discussed in Sec. \ref{sec:intro}, this corrector steers the generation toward regions where the distribution $p_t(x_t \mid C)$ is high, while simultaneously avoiding regions of overlap with the individual concept distributions $\{p_t(x_t \mid c_k)\}_k$. 
Observe that this encourages the model to generate samples that satisfy all concepts in $C$ without over-emphasizing any single concept. 
We present the {\name} corrector pseudo code in Algorithm \ref{algo:corrector_pipeline}.

To sample from the unnormalized probability distribution in  Eq. \ref{eq:corrector_dist}, 
an well-known approach is to compose the concept distributions $\{p_t(x_t \mid c_k) \}_k$  in the space of score functions \citep{compose-diff, rrr}.
We break-away from this approach and compose the distribution in the Tweedie mean space; we show how this offers a more general framework for composition.
In Tweedie-denoised space we define composition as:
\begin{equation}
    \label{eq:compose_tweedie}
    \tilde{x}_{tweedie} =w_0\,\hat{x}_{tweedie}[\epsilon_t^{\lambda, C}]\;+\; w_1 \,\hat{x}_{tweedie}[\epsilon_t^{\lambda, c_1}] \;+\; \dots \;+\; w_K \,\hat{x}_{tweedie}[\epsilon_t^{\lambda, c_K}], 
\end{equation}

% followed by renoisification:
% \begin{equation}
%     x_t^{k+1} = \tilde{x}_{tweedie} + \sigma_t z, \quad z \sim \mathcal{N}(0, I).
% \end{equation}
\begin{wrapfigure}{r}{0.52\textwidth}
    % \centering
    % \begin{minipage}{0.48\textwidth}
    %     \centering
    %     \includegraphics[width=\linewidth]{figs/char_resampling_corr.pdf} 
    %     \caption{Characterization of Resampler and corrector steps. Resampling is more powerful at high $t$ while the corrector improves slowly with more timesteps and saturates.}
    %     \label{fig:resampler_corr_behaviours} 
    % \end{minipage}
    % \hfill
    \begin{minipage}{0.53\textwidth}
        \centering
        \vspace{-2em}
        \begin{algorithm}[H]
        \caption{DDIM with {\name} Corrector}
        \label{alg:co3_corrector}
        \begin{algorithmic}[1]
        \Require $x_T$, number of DDIM steps $T$, timestep threshold $T_c$, prompt $C$, set of concepts $\{c_k\}$
        \For{$t = T$ down to $1$}
            \If{$t > T_c$}
                \State $x_t^{(2)} \gets \text{\name}(x_t, C, c_1, \dots, c_k)$
                \LineComment{\hspace{3.2em}\(\triangleright\) \textcolor{red}{Proposed correction}}
                \State $x_t \gets x_t^{(2)}$ 
            \EndIf
            \State $x_t \gets \text{DDIM\_step}(x_t)$ \Comment{\textcolor{red}{Reverse DDIM}}
        \EndFor
        \end{algorithmic}
        \label{algo:corrector_pipeline}
        \end{algorithm}
    \end{minipage}
    \vspace{-2em}
\end{wrapfigure}
where $w_0 >0 \text{ and } w_1, \dots, w_K < 0$ are concept weights.  
Note: $\hat{x}_{tweedie}[\epsilon_t^{\lambda, c_k}]$ is the Tweedie mean corresponding to the CFG composed noise prediction $\epsilon_t^{\lambda, c_k}$ for concept $c_k$ with guidance weight $\lambda$. 

\textbf{How should composition weights be chosen?}
We analyze the effect of different weight assignments to Eq. \ref{eq:compose_tweedie}, in particular, how the constraint on the concept weights ${w_i}$ influences both the theoretical interpretation and the empirical behavior of the compositional Tweedie mean.

A valid compositional Tweedie mean $\tilde{x}_{tweedie}$ in Eq. \ref{eq:compose_tweedie} must be one that can be expressed in the definition given in \eqref{eq:cfg_tweedie} as DDIM goes through a series of Tweedie means for image generation as described in \secref{sec:background}. 
Proposition~\ref{main_lemma:tweedie_sum} shows that this is guaranteed only when the weights satisfy the normalization condition $\sum_i w_i = 1$.

\begin{proposition}
    Let $\xhat_{tweedie}[\eps_t^{\lambda, c}]:=x_t - \sigma_t \ \eps_t^{\lambda, c}$ be the tweedie mean from CFG composed noise $\eps_t^{\lambda} = \eps_t^{\phi} + \lambda(\eps_t^{C} - \eps_t^{\phi})$ for some $\lambda$. Let, $\Tilde{\xhat}_{tweedie}$ be the composed tweedie-mean defined as 
    $\Tilde{\xhat}_{tweedie} = \sum_k w_k  \xhat_{tweedie}[\eps_t^{\lambda, c_k}]$. 
    Then,
    \vspace{-0.1in}
    \begin{enumerate}[label=\alph*)]
        \item {\name}-corrector: $\Tilde{\xhat}_{tweedie}$ can be expressed in the form of a tweedie-mean at time $t$, i.e. $\Tilde{\xhat}_{tweedie} = x_t - \sigma_t \ \tilde{\eps}_t^{\tilde{\lambda}, C}$  if and only if $\sum_k w_k = 1$. Here $\tilde{\lambda}=\lambda$ and CFG composed noise $\tilde{\eps}_t^{\Tilde{\lambda}, C}= \eps_t^{\phi} + \lambda(\sum_k w_k \eps_t^{c_k} - \eps_t^{\phi})$. 
        \item {\name}-resampler: $\Tilde{\xhat}_{tweedie} = -\lambda \ \sigma_t \ \sum_k  w_k \eps_t^{c_k}$ is weighted noise if and only if $\sum_k w_k =0$.
    \end{enumerate}    \label{main_lemma:tweedie_sum}
\end{proposition}
\vspace{-0.05in}

\textbf{Remarks:} 
\circled{1} See Appendix \secref{app:composition_proof} for proof. Proposition~\ref{main_lemma:tweedie_sum} a) shows that when $\sum_k w_k$ = $1$, denoted as \textit{{\name}-corrector}, the composed Tweedie-mean corresponds to the CFG composed noise $\tilde{\eps}_t^{\Tilde{\lambda}, C} = \eps_t^{\phi} + \lambda(\sum_k w_k \eps_t^{c_k} - \eps_t^{\phi})$ with the same guidance scale $\lambda$. 
This preserves the relative guidance strength between unconditional and conditional scores as proposed by classifier-free guidance (CFG) (see Eq. \ref{eq:cfg}).
This is in contrast to composable diffusion methods \citep{compose-diff, rrr} which use arbitrary guidance weights $\{\lambda_i\}$ for composing different concepts (see Eq. \ref{eq:compose-diff}); hence this does not preserve the CFG form. 
We believe this explains, at least partly, why composable diffusion often produces out of manifold samples.
See Sec. \ref{app:compose_diff-sum_1} in Appendix for more discussion.

\circled{2} Proposition~\ref{main_lemma:tweedie_sum} b) shows that when the weights sum to zero, denoted as \textit{resampler}, composed Tweedie mean is independent of the current sample $x_t$ and is only a function of the concept noises. 
Instead of correcting the current sample $x_t$, it replaces $x_t$ with a weighted combination of the concept noises. 
In other words, when $\sum_k w_k =0$, the composition is sampling from noise, and only valid at $t=T$ where $x_T\sim\mathcal{N}(0,1)$.

To further characterize the behavior of the composition under the two conditions $\sum_k w_k = 1$ and $\sum_k w_k = 0$, 
we evaluate the performance of different weighting strategies across timesteps $t$ of the diffusion chain.
Figure~\ref{fig:resampler_corr_behaviours} reveals complementary dynamics between the resampler and the corrector. 
Specifically, the resampler ($\sum_k w_k = 0$) is effective at high $t$ values near $T$, but its effectiveness diminishes as the denoising process progresses toward lower $t$. 
Although resampling is not theoretically valid for $t<T$, we observe practical improvements until approximately $t \approx 0.9T$. 
In contrast, the corrector ($\sum_k w_k = 1$) exhibits gradual improvement with increasing timesteps and saturates around $t \approx 0.75T$. 
These findings invite a hybrid strategy into {\name}: apply resampling during early timesteps ($t > T_R$), followed by correction until a threshold ($T_C < t < T_R$), 
beyond which further gains are marginal.

The pseudocode for the zero-sum-weight resampler ({\name}-resampler) and the unit-sum-weight ({\name}-corrector) corrector is provided in Algorithm~\ref{algo:resampler} and Algorithm~\ref{algo:corrector}, respectively.
Importantly, the same diffusion denoiser model can be reused to compute the individual concept scores under different input conditions ${c_k}$, 
thereby \textit{eliminating the need for any costly backward passes to obtain gradients}. 
Furthermore, the framework operates without reliance on model-specific architectural details, making it \textit{fully model-agnostic} and \textit{gradient-free}.

\begin{wrapfigure}{r}{0.45\textwidth} 
  \centering
  \vspace{-0.2in}
  \includegraphics[width=\linewidth]{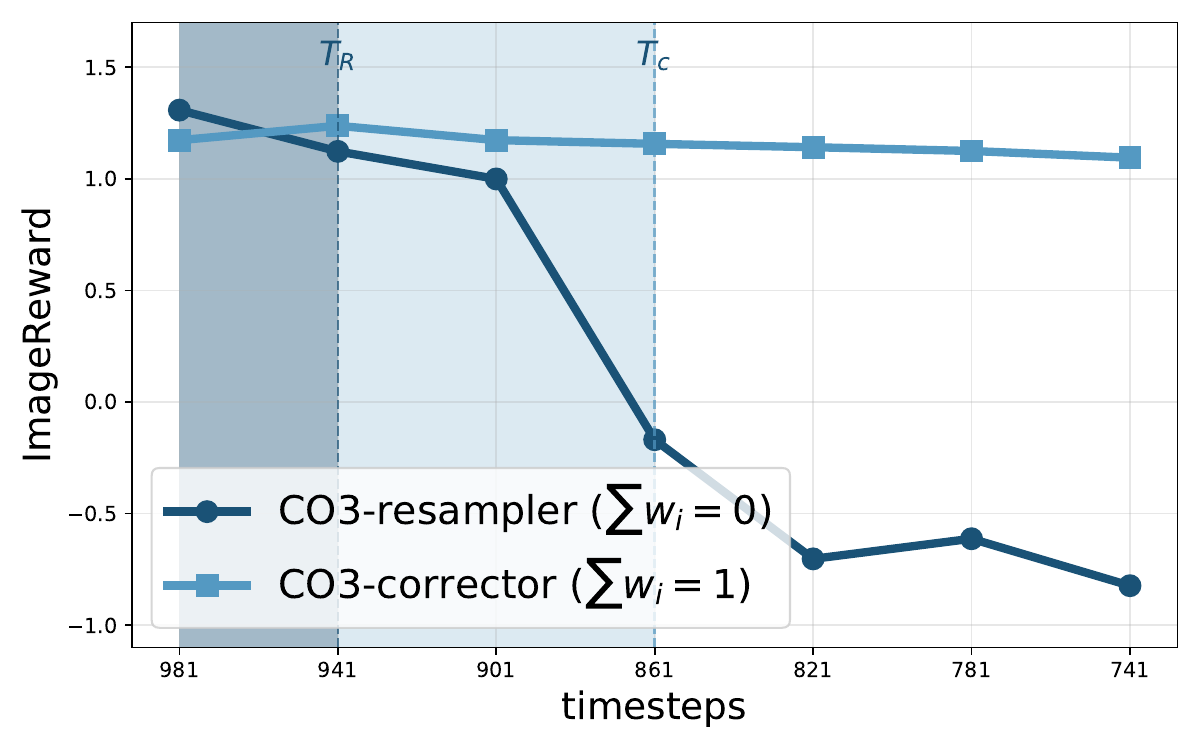} 
  \vspace{-0.25in}
        \caption{\small{Characterization of Resampler and Corrector steps. Resampling is more powerful at high $t$ while the Corrector improves slowly with more timesteps and saturates.}}
        \label{fig:resampler_corr_behaviours} 
\end{wrapfigure}
\textbf{Closeness-Aware Concept Weight Modulation:}
To make the {\name} corrector more adaptive, we anchor the weight $w_0$ ($w_0$=1.0 in Algo.~\ref{algo:resampler} and $w_0=2.0$ in Algo.~\ref{algo:corrector}), and assign weights to each concept based on how \emph{close} the current noise prediction $\epsilon^C$ is, to the noise corresponding to that concept, $\epsilon^{c_k}$. 
Intuitively, if a sample looks closer to concept $c_k$ than to all others, we want to penalize $c_k$ more strongly (i.e., give it a larger negative weight), while reducing the strength of the other concepts. This encourages the sampler to move away from the nearest mode, preventing collapse toward one dominant concept. 
Formally, let $d_k = \lVert\epsilon^C - \epsilon^{c_k}\rVert$ denote the distance between the current sample and the mode of concept $c_k$. We convert distances $\{d_k\}_{k=1}^K$ into \emph{affinity scores} using an exponential kernel:
\begin{equation}
\label{eqn:weight_decay}
    a_k = \exp(-\beta \, d_k), \qquad \beta > 0
\end{equation}
so that concepts closer to the current sample (smaller $d_k$) receive higher affinity.
These affinity scores are then normalized to define the weights:
\begin{equation}
    \vspace{-0.1cm}
    w_k = -\frac{a_k}{\sum_{j=1}^K a_j},\vspace{-0.1cm}
\end{equation}
which ensures each weight is negative and the total sum satisfies $\sum_{k=1}^K w_k = -1$. 
As a result, concepts that are closer to the current sample receive stronger negative weights, while farther concepts are down-weighted in proportion to their distance. 
This weight modulation scheme samples from the whole hyper-plane $\sum_{k=0}^K w_k =1$ (for {\name}-correction) or $\sum_{k=0}^K w_k=0$ (for {\name}-resampling) instead of a fixed weights 
for the entire course of generation.
\begin{figure*}
    \centering
    \begin{minipage}[t]{0.49\textwidth}
        \fontsize{9.5}{11.4}\selectfont
        \begin{algorithm}[H]
        \caption{{\name}-resampler ($\sum w_k=0$)}
        \label{algo:resampler}
        \begin{algorithmic}[1]
            \Require $x_t$, denoiser $\psi$, $\bar{\alpha}_t$, $w_{0:K}$, guidance $\lambda$, prompt $C$, concepts $\{ c_k\}_k$
            \For{$p=1$ to $P$}
                \State $\eps^{c_0} = \psi(x_t, C)$, $\eps^{c_k} = \psi(x_t, c_k)$,
                \State \hspace{2.4em} $k=1,\dots,K$ 
                \State $\xhat_{tweedie}[\epsilon^{\lambda, c_k}_t] = x_t - 
                \sqrt{1 - \bar{\alpha}_t} \epsilon_t^{\lambda, c_k}$
                \cyanbox{
                % \State $\Tilde{\xhat}_{tweedie} = \sum_{k=0}^K w_k \xhat_{tweedie}[\eps^{\lambda, c_k}_t] $ \Comment{Tweedie mean composition}
                \State $\Tilde{\xhat}_{tweedie} = - \lambda \sqrt{1-\alphabar_t} \sum_{k=0}^K w_k \eps^{c_k}$
                \LineComment{\hspace{1.8em}\(\triangleright\) Projection onto noise space}
                }
                % \State  $\hat{x}^{c_i}_{t,0} = \frac{x_t - \sqrt{1 - \bar{\alpha}_t}\epsilon_t^{\lambda, c_i}}{\bar{\alpha}_t}, i=1,2$ \Comment{tweedie means for each concept}
                % \State $\tilde{\eps}_t^\lambda =\frac{x_t + \sqrt{1-\bar{\alpha}_t}(2\eps^{\lambda, C}_t - \eps^{\lambda, c_1}_t - \eps^{\lambda, c_2}_t)}{\sqrt{1- \bar{\alpha}_t}} $ \Comment{composed noise}
                % \begin{tcolorbox}[colback=cyan!15, colframe=gray!30, arc=0pt, boxrule=0.5pt, left=-3pt, right=0pt, top=1pt, bottom=1pt]
                % \State $\tilde{\xhat}_{t,0}^\alpha = x_t - \sqrt{1 - \alphabar}\ \tilde{\eps}_t^\lambda$ \Comment{Composed tweedie mean}
                % \end{tcolorbox}
                \begin{tcolorbox}[colback=green!15, colframe=gray!30, arc=0pt, boxrule=0.5pt, left=-3pt, right=0pt, top=1pt, bottom=1pt]
                \State $x_t^{(2)} = \tilde{\xhat}_{tweedie} + \sqrt{1 - \bar{\alpha}_t}\ \epsilon_t^{\phi}$
                \LineComment{\hspace{1.8em}\(\triangleright\) Uncond. manifold correction}
                \end{tcolorbox}
                \State $x_t \gets x_t^{(2)}$
            \EndFor
                \State \Return $x_t$
        \end{algorithmic}
        \end{algorithm}
        \vspace{-0.2in}
    \end{minipage}
    \hfill
    \begin{minipage}[t]{0.49\textwidth}
        \fontsize{9.5}{11.4}\selectfont
        \begin{algorithm}[H]
            \caption{{\name}-corrector ($\sum w_k=1$)}
            \label{algo:corrector}
            \begin{algorithmic}[1]
                \Require $x_t$, denoiser $\psi$, $\bar{\alpha}_t$, $w_{0:K}$, guidance $\lambda$, prompt $C$, concepts $\{ c_k\}_k$
            \For{$p=1$ to $P$}
                \State $\eps^{c_0} = \psi(x_t, C)$, $\eps^{c_k} = \psi(x_t, c_k),$
                \State \hspace{2.4em} $k=1,\dots,K$ 
                    \begin{tcolorbox}[colback=cyan!15, colframe=gray!30, arc=0pt, boxrule=0.5pt, left=-3pt, right=0pt, top=1pt, bottom=1pt]
                    \State $\Tilde{\xhat}_{tweedie} = \sum_{k=0}^K w_k \xhat_{tweedie}[\eps^{\lambda, c_k}_t] $ \LineComment{\hspace{1.8em}\(\triangleright\) Tweedie mean composition}
                    \end{tcolorbox}
                    \begin{tcolorbox}[colback=orange!15, colframe=gray!30, arc=0pt, boxrule=0.5pt, left=-3pt, right=0pt, top=1pt, bottom=1pt]
                        \State $r = \frac{\lVert\xhat_{tweedie}^{c_0} \rVert_2}{\lVert\tilde{\xhat}_{tweedie}\rVert_2}$ \Comment{mean normalization}
                        \State $\tilde{\xhat}_{tweedie} \gets \tilde{\xhat}_{tweedie} * r$
                    \end{tcolorbox}
                        \begin{tcolorbox}[colback=green!15, colframe=gray!30, arc=0pt, boxrule=0.5pt, left=-3pt, right=0pt, top=1pt, bottom=1pt]
                        \State $x_t^{(2)} = \tilde{\xhat}_{tweedie} + \sqrt{1 - \bar{\alpha}_t}\ \epsilon_t^{\phi}$ %\Comment{Uncond manifold correction}
                    \end{tcolorbox}
                    \State $x_t \gets x_t^{(2)}$
                \EndFor
                \State \Return $x_t$
            \end{algorithmic}
        \end{algorithm}
    \end{minipage}
    \vspace{-1em}
\end{figure*}

% !TEX root = ../main.tex
\vspace{-0.15in}
\section{Experiments}
\subsection{Experimental Setup}
\textbf{Implementation details:} 
We implement our method on SDXL~\citep{sdxl} using a 50-step DDIM sampler with guidance scale $\lambda = 5.0$. 
{\name} is applied during the first $20\%$ of denoising steps ($T_c =  0.8T$), with the initial three steps reserved for resampling. For weight modulation, we set $\beta = 0.8$. 
All experiments are performed on a single NVIDIA A6000 GPU (48GB). Additional implementation details are provided in the Appendix \secref{app:implementation_details}.

\textbf{Evaluation benchmark and metrics:} 
We first evaluate our method on simpler prompts (two concepts) using the Attend-Excite benchmark from \citet{ae} which contains categories of prompts: Animal–Animal, Animal–Object, and Object–Object. We then take the best performing baselines from this task and compare them with {\name} on more complex prompts (three or more concepts) from T2ICompbench~\citet{t2icompbench} and rare or novel concept prompts from RareBench~\citep{r2f}. Specifically, we consider "Complex" category from T2ICompbench and "Concat", "Relations" and "Complex" from RareBench;  the subset of categories which stress multi-concept interactions, attribute entanglement and rare-concept scenarios.

For evaluation, we adopt BLIP-VQA~\citep{t2icompbench}, ImageReward~\citep{imagereward} and DINOv2~\citep{dinov2} scores for simpler prompts. 
    While BLIP-VQA and ImageReward (widely used in recent works~\citep{tome,ella}) measure the faithfulness of generated-images to input-prompts, DINOv2 measures the quality of concepts generated by each model via a classification score. 
    % For evaluation of more complex and rare-concept prompts, we follow the protocol from \citet{r2f} and report ImageReward and Human Eval scores.
    R2F~\citet{r2f} notes that BLIP-VQA can be unreliable in rare-concept scenarios; hence, following their recommendation, we report metrics that more closely reflect human judgement, namely, ImageReward and human evaluation (from 11 participants).
    % following the protocol from~\citet{r2f}.

\textbf{Comparison methods:} 
We compare \textbf{{\name}} against state-of-the-art approaches, including optimization-based correction methods (Attend-Excite~\citep{ae}, SynGen~\citep{syngen}, Divide-Bind~\citep{dividebind}, Magnet~\citet{magnet}, ToME~\citep{tome}); 
composable generation methods Comp-Diff~\citep{compose-diff}, Tweediemix~\citep{tweediemix}; LLM augmented method R2F~\citep{r2f} and the noise-optimization method InitNO~\citep{initno}. 
For fair comparison against Tweediemix's sampling strategy, we implemented Tweediemix without its LORA-finetuned customization stage. 
All optimization-based approaches are architecture-dependent and require gradient computations at inference, 
with InitNO being model-agnostic but still incurring costly backpropagation. 
In contrast, only Composable Diffusion, Tweediemix and \textbf{{\name}} are purely sampling-based, model-agnostic, and completely gradient-free. 
\vspace{-0.2cm}
\subsection{Experimental Results}

\begin{table}[tbp]
\centering
\tiny
\caption{Quantitative comparison of different methods on two concept prompts from \citet{ae}. We evaluate the generated images using three metrics: BLIP-VQA, ImageReward and DINOv2 scores on categories Animal-Animal(A-A), Animal-Object(A-O) and Object-Object(O-O). The top performing model is highlighted in \firstcolorlegend and the 2nd best in \seccolorlegend.}
\begin{tabular}{@{}>{\raggedright\arraybackslash}p{2.4cm}p{0.15cm}p{0.15cm}p{0.25cm}*{9}{>{\centering\arraybackslash}p{0.67cm}}@{}}
\toprule
              & \multicolumn{3}{c}{Properties} & \multicolumn{3}{c}{BLIP-VQA $\uparrow$} & \multicolumn{3}{c}{ImageReward $\uparrow$} & \multicolumn{3}{c}{DINOv2 $\uparrow$} \\
              \cmidrule(lr){2-4} \cmidrule(lr){5-7} \cmidrule(lr){8-10} \cmidrule(lr){11-13}
              & train-free & grad-free & model-agnostic & A-A & A-O & O-O & A-A & A-O & O-O & A-A & A-O & O-O \\ \midrule
SD1.5\citep{sd}   &      \checkmarkgreen  &            \checkmarkgreen &       -   &      0.3239         &    0.5958     &       0.2730          &     -0.2733    &   0.4262      &        -0.5521                 &   0.2163      &      0.2651         &    0.3010      \\
Attend-excite\citep{ae} &  \checkmarkgreen             &    \crossred           &   \crossred             &    0.6980     &       0.7865          &    0.5155     &     0.8244    &      1.2380           &    0.8741     &  0.2245       &     0.2599          &     0.2921     \\
SynGen\cite{syngen} &  \checkmarkgreen             &    \crossred           &   \crossred             &    0.3348     &      0.7689           &     \firstcolor 0.6595    &   -0.2445      &      0.9838           &     0.7315    &     0.2187    &     0.2606          &     0.2790     \\
Divide \& Bind &  \checkmarkgreen             &    \crossred           &   \crossred             &     0.7201     &          0.8399       &     0.5887    &    0.8499     &      1.2516           &    0.8134     &    0.2029     &      0.2481         &     0.2865     \\

InitNO\citep{initno} &  \checkmarkgreen             &    \crossred           &   \checkmarkgreen             &     0.7264     &        0.7998        &   0.5406      &     \seccolor 1.0082    &         1.3927       &    \firstcolor1.1383   &   0.2252      &      0.2654         &    0.2953      \\\midrule
% Compose-diffusion &  \checkmarkgreen             &    \checkmarkgreen           &   \checkmarkgreen             &         &                 &         &         &                 &         &         &               &          \\\midrule 
SDXL\cite{sdxl}          &     \checkmarkgreen          &     \checkmarkgreen          &      -         &  0.6950       &    0.8654             &  0.4926       &    0.7820     &        1.5574         &     0.6789    &    0.2234      &        0.2618       &       0.2629    \\
Magnet~\cite{magnet}        &      \checkmarkgreen        &      \crossred         &          \crossred      &     0.6782    &  0.8744        &   0.5398      &   0.6450      &    1.5895       &    0.7856    &  0.2122 & 0.2636 & 0.2624   \\
SynGen(SDXL)\cite{syngen}          &       \checkmarkgreen        &      \crossred         &          \crossred      &    0.6816     &        0.8578         &    0.4652     &    0.6998     &       1.5622          &   0.6441      &   0.2221 & 0.2650 & 0.2710 \\
ToMe\citep{tome}          &       \checkmarkgreen        &      \crossred         &          \crossred      &   0.6257      &     \seccolor 0.8808            &  \seccolor 0.6440       &     0.3895    &     \seccolor 1.5736            &      1.0118   &   0.2226 & 0.2620 & 0.2545     \\
\midrule
Tweediemix\cite{tweediemix}\tablefootnote{adapted without the (LORA finetuned) concept-customization stage.}          &       \checkmarkgreen        &      \checkmarkgreen         &          \checkmarkgreen     &   \seccolor 0.7390      &     0.8059           &   0.4683    &    1.0023   &    1.3127       &   0.7959     &   0.2519 & \seccolor 0.3105 & \seccolor 0.3237   \\
Comp-diff\citep{compose-diff} &  \checkmarkgreen             &    \checkmarkgreen           &   \checkmarkgreen             &    0.2846     &      0.5656           &    0.4529           &   -1.1399      &       -0.2068          &     -0.0955    &     \seccolor 0.2497    &    0.2756           &  0.2908        \\
{\name}(Ours)          &       \checkmarkgreen        &      \checkmarkgreen        &          \checkmarkgreen      &    \firstcolor 0.7441     &      \firstcolor 0.8878          &    0.5146     &    \firstcolor 1.2342    &      \firstcolor 1.6744           &    \seccolor 1.0158     &     \firstcolor 0.2623 & \firstcolor 0.3259 & \firstcolor 0.3283   \\ \bottomrule
\end{tabular}
\label{tab:simple_prompts}
\vspace{-2em}
\end{table}

\begin{table}[h]
\centering
\tiny
\caption{Quantitative results on multi-concept prompts from T2ICompbench and rare-concepts from RareBench. We report ImageReward (IR) and Human Evaluation (HE) scores.}
\label{tab:combined_results}
\begin{tabular}{@{}lcccccccc@{}}
\toprule
& \multicolumn{2}{c}{T2ICompbench} & \multicolumn{6}{c}{RareBench} \\
\cmidrule(lr){2-3} \cmidrule(lr){4-9}
& \multicolumn{2}{c}{Complex} & \multicolumn{2}{c}{Concat} & \multicolumn{2}{c}{Relations} & \multicolumn{2}{c}{Complex} \\
\cmidrule(lr){2-3} \cmidrule(lr){4-5} \cmidrule(lr){6-7} \cmidrule(lr){8-9}
Method & IR $\uparrow$ & HE $\uparrow$ & IR $\uparrow$ & HE $\uparrow$ & IR $\uparrow$ & HE $\uparrow$ & IR $\uparrow$ & HE $\uparrow$ \\
\midrule
SDXL~\citep{sdxl}   & 0.3083 & \seccolor 0.5983 & \seccolor 0.1018 & \seccolor 0.5416 & \seccolor 0.3607 & 0.5016 & \seccolor 1.1907 & \firstcolor 0.6283 \\
SynGen(SDXL)~\citep{syngen} & 0.2889 & 0.5133 & 0.0576    & ---    & 0.3177    & ---    & 1.1452    & ---    \\
ToMe~\citep{tome}   & 0.2891 & 0.5866 & -0.2041& 0.4500 & -0.0632& 0.4983 & 1.0200 & 0.48666 \\
R2F~\citep{r2f}    & \seccolor 0.3561 & 0.4816 & -0.0162& 0.4300 & \firstcolor 0.3620 & \seccolor 0.5200 & 1.1542 & 0.53666 \\
{\name}(Ours)    & \firstcolor 0.4406 & \firstcolor 0.8033 & \firstcolor 0.1610 & \firstcolor 0.6167 & 0.2588 & \firstcolor 0.6200 & \firstcolor 1.2211 & \seccolor 0.61166 \\
\bottomrule
\end{tabular}
\end{table}

\textbf{Quantitative Comparison:} 

\textbf{Simpler prompts (Attend-Excite)}: Table~\ref{tab:simple_prompts} demonstrates the extent to which \textbf{{\name}}, despite being model-agnostic and optimization-free, competes or outperforms optimization-based and model-agnostic baselines on ImageReward. 
Gains are substantive in the Animal–Animal and Animal–Object categories. 
On BLIP-VQA, \textbf{{\name}} delivers the best performance in these two categories and improves over the base SDXL on the Object category, which contains prompts of the form: “a [color1] [object1] and a [color2] [object2].”
Unlike methods such as ToMe~\cite{tome}, SynGen~\citep{syngen}, Attend-Excite~\citep{ae}, and Divide-Bind~\cite{dividebind}, which rely on model-specific architectures or explicit attention-based binding losses, 
\textbf{{\name}} is architecture-independent, sampling-based, and free of explicit subject–attribute binding. 
Interestingly, by simply targeting “pure” modes under $p(x \mid C)$, \textbf{{\name}} discovers layouts that better capture multi-object relations (Animals–Animals, Animal–Objects) and finds high-probability regions that preserve subject–attribute bindings (Objects).

% ------ simple prompts figure ------
\begin{figure}[htb]
    \centering
    \includegraphics[width=0.85\textwidth]{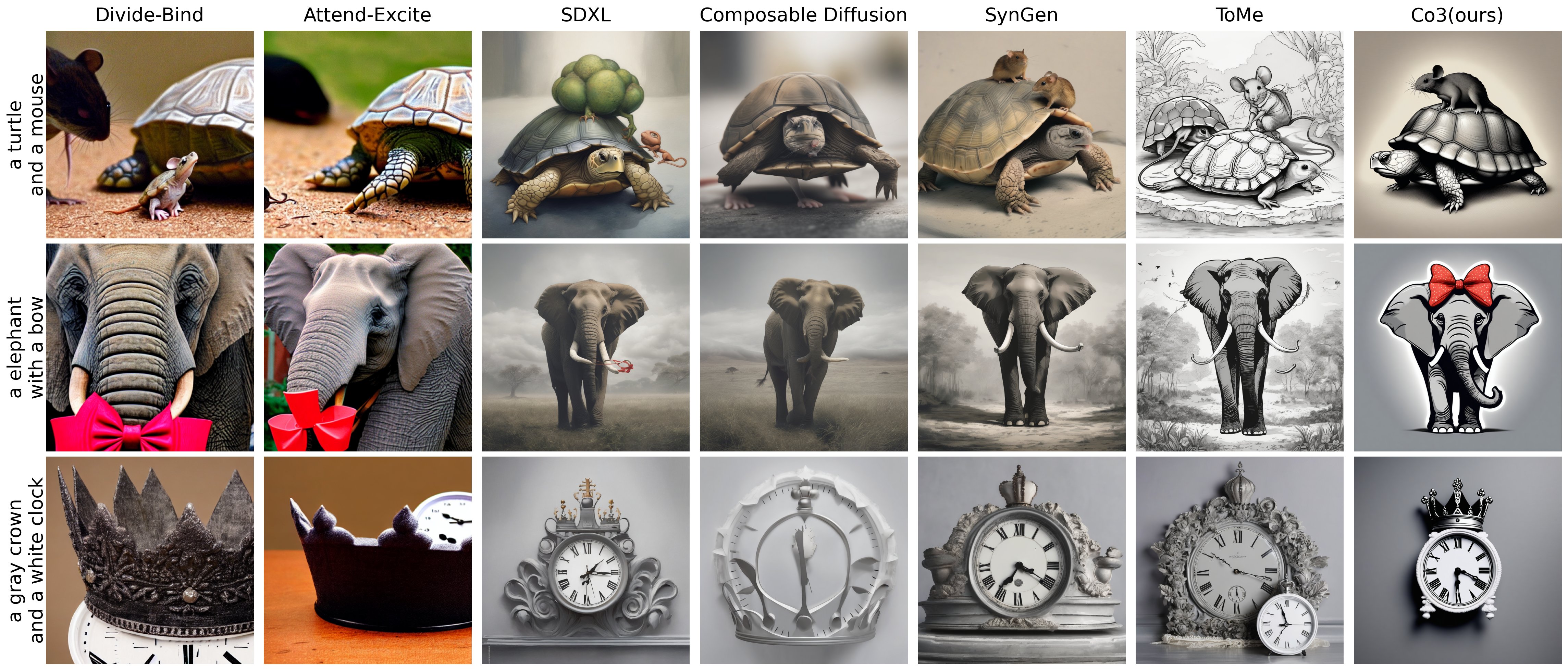} 
    \vspace{-1em}
    \caption{\small{Qualitative comparison of different methods on simpler prompts. }}
    \label{fig:qualitative_small}
    \vspace{-0.5em}
\end{figure}

\textbf{Multiple and rare-concept prompts (T2ICompbench and RareBench):} Table~\ref{tab:combined_results} shows that {\name} consistently outperforms or matches the best-performing baselines across all categories and metrics. We note that {\name} is not designed specifically for rare-concept scenarios---it is a general compositional sampling correction that does not rely on rarity-aware priors or LLM-based prompt augmentation. Yet, {\name} matches or exceeds R2F on RareBench, despite R2F being tailored for rare-concept generation.  We believe this indicates that the early-alignment phase of {\name}, which mitigates concept dominance and encourages sampling from “pure” modes, is naturally beneficial for rare-concept generation as well.

These performance gains empirically validates our hypothesis on mode overlap and also highlights \textbf{{\name}}’s effectiveness in correcting the compositional limitations of the base model.
\vspace{-0.5em}
\begin{figure}[htb]
    % \centering \includegraphics[width=0.98\textwidth]{figs/complex_prompt_image_grid.png} 
    \centering \includegraphics[width=1.0\textwidth]{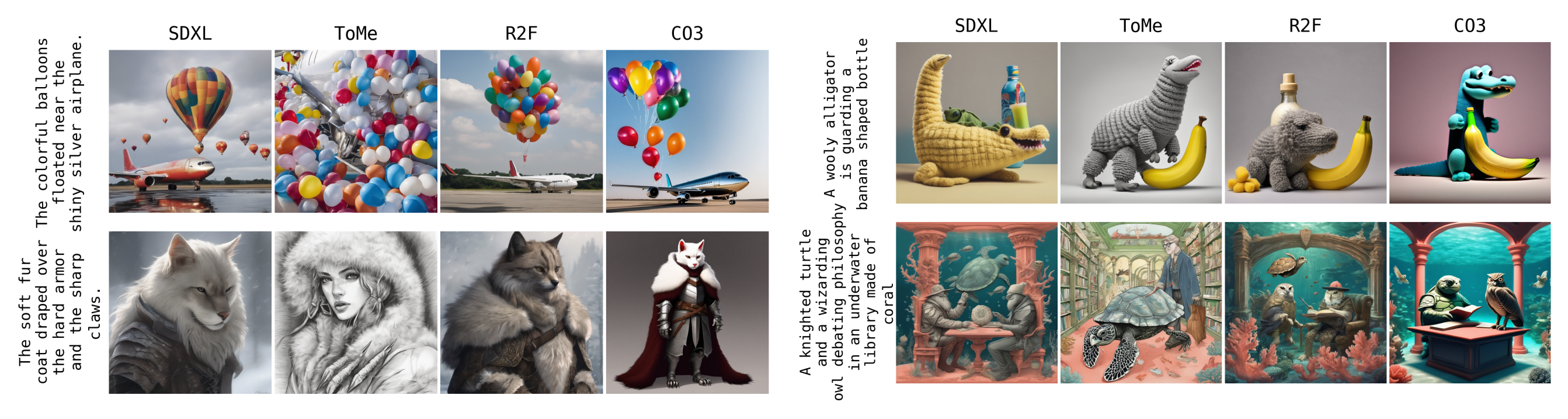} 
    \vspace{-1.5em}
    \caption{\small{Qualitative comparison of {\name} with competing methods on complex and rare concept prompts.}}
    \label{fig:qualitative_complex}
    \vspace{-1em}
\end{figure}

%%%%%%%%%%%%%%%%%%%%%%%%%%%%%%%%%% new %%%%%%%%%%%%%%%%%%%%%%%%%%%%%%%%

\begin{wraptable}[12]{l}{0.5\textwidth} % [14] is the number of text lines to wrap
\vspace{-1em} % Pulls table up to remove top gap
\centering
\captionof{table}{Ablations study conducted on the {\name} method. Starting with the base-model SDXL we progressively add different components.}
\label{tab:co3_ablation_main}
\tiny
\setlength{\tabcolsep}{2pt}
\begin{tabular}{l c c c c c c c}
\toprule
& \multicolumn{3}{c}{ImageReward $\uparrow$} 
& \multicolumn{3}{c}{BLIP-VQA $\uparrow$} 
& Avg $\uparrow$ \\
\cmidrule(lr){2-4} \cmidrule(lr){5-7}
Method 
& A-A 
& \makecell{A-O}
& O-O
& A-A
& \makecell{A-O}
& O-O
& \\
\midrule
SDXL & 0.7820 & 1.5474 & 0.6789 & 0.6951 & 0.8654 & 0.4926 & 0.8435 \\
\: + Resampling & 1.0881 & \firstcolor 1.6755 & 0.8263 & \seccolor 0.7351 & 0.8666 & 0.4528 & 0.9437 \\
\:\: + Corrector & 1.0630 & 1.6429 & 0.8949 & 0.7177 & \seccolor 0.8794 & 0.4796 & 0.9463 \\
\:\:\: + weight-mod. 
& \firstcolor 1.2342 &  \seccolor 1.6744 & \firstcolor 1.0158 & \firstcolor 0.7441 & \firstcolor 0.8878 &  \seccolor 0.5146 & \firstcolor 1.0118 \\
\midrule
SDXL+Corrector & 0.9543 & 1.5140 & 0.8894 & 0.7149 & 0.8801 & 0.4989 & 0.9086 \\
\: + weight-mod.
& \seccolor 1.1732 & 1.5865 & \seccolor 1.0067 & 0.7211 & 0.8701 & \firstcolor 0.5206 & \seccolor 0.9797 \\
\bottomrule
\end{tabular}
\vspace{-1em} % Pulls text up to remove bottom gap
\end{wraptable}

\begin{figure*}[t]
    \centering
    % Adjust the widths (e.g., 0.48\textwidth) to fit side-by-side with a little gap
    \vspace{-0.5in}
    \begin{minipage}{0.55\textwidth}
        \centering
        \includegraphics[width=\linewidth]{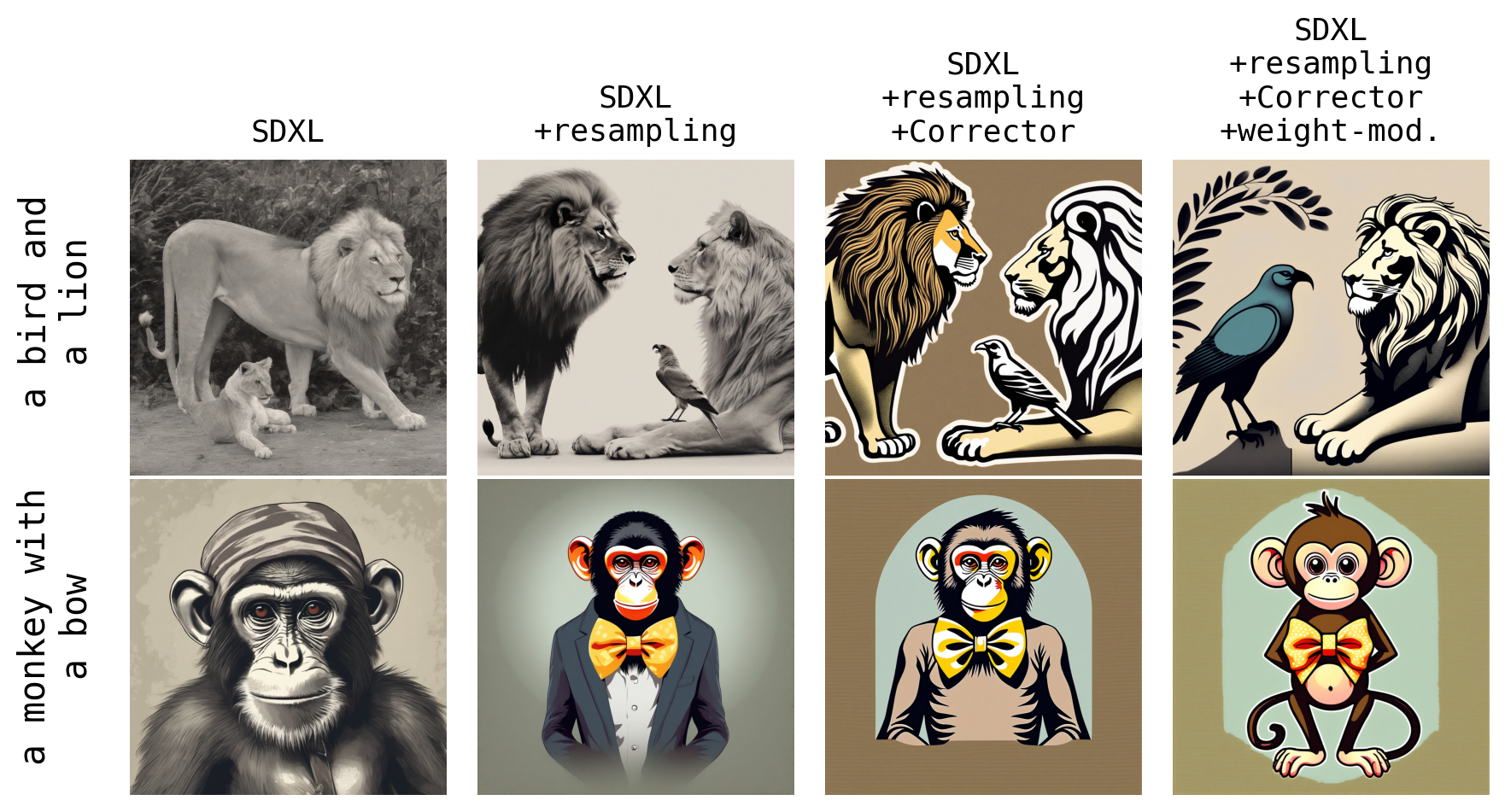}
    \end{minipage}
    \hfill
    \begin{minipage}{0.43\textwidth}
        \centering
        \includegraphics[width=1.0\linewidth]{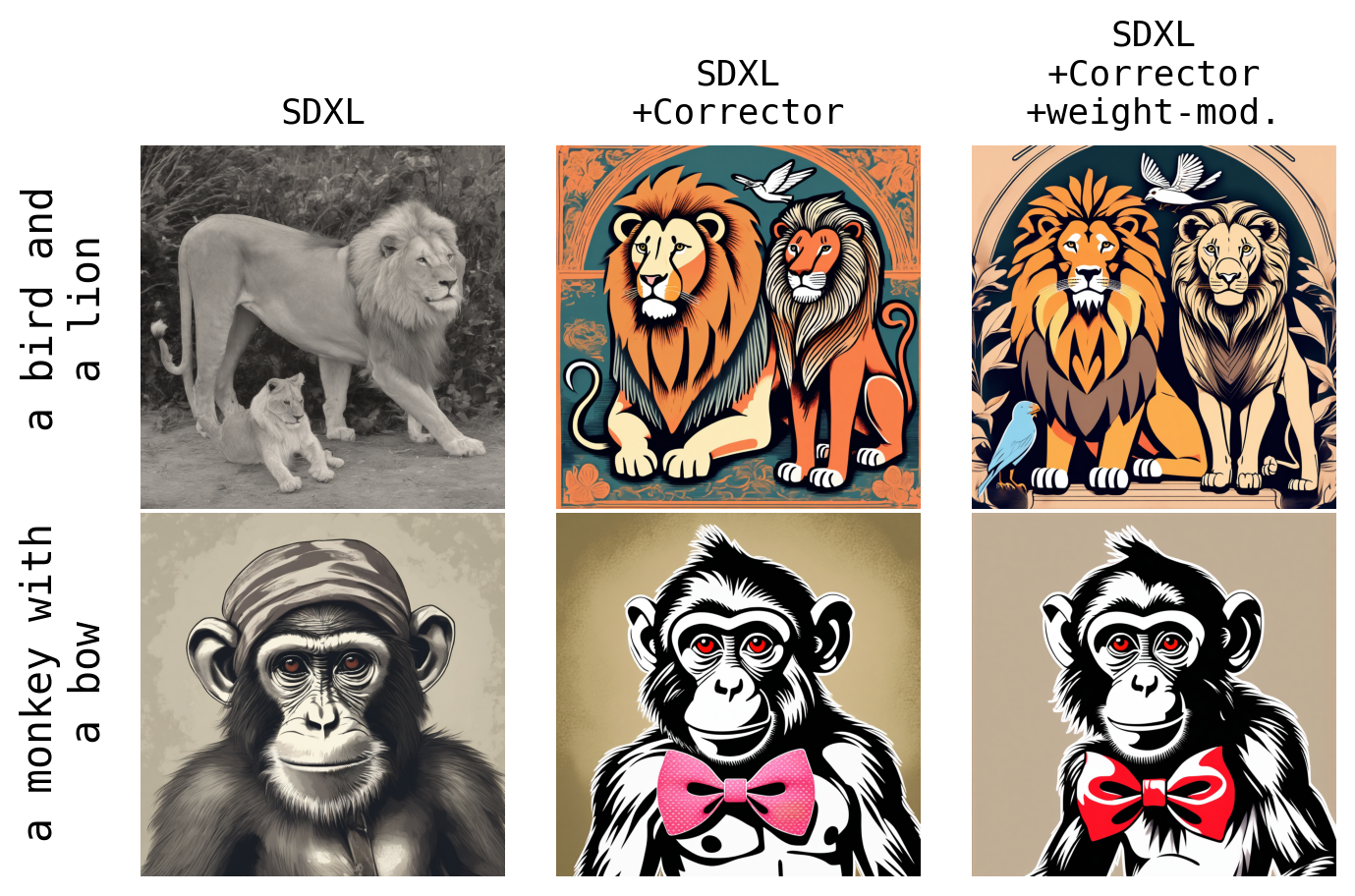}
    \end{minipage}
    
    \caption{Qualitative results of {\name} with different configurations on SDXL.}
    \label{fig:co3_ablation}
    \vspace{-0.3in}
\end{figure*}

%%%%%%%%%%%%%%%%%%%%%%%%%%%%%%%%%%%%%%

\textbf{Qualitative Comparison:} 
Figure~\ref{fig:qualitative_small} compares \textbf{{\name}} on Attend-Excite (simpler) prompts with several baselines. 
We illustrate common issues in compositional alignment—concept missing, attribute mixing, and object binding—across all categories. 
Divide-Bind~\citep{dividebind} and Attend-Excite~\citep{ae} enhance the attention layer of the base model, but often yield cropped or incomplete concepts (rows 1, 3). 
Comp-Diff~\citep{compose-diff} suffers from missing or merged concepts (rows 2, 3). 
SynGen~\citep{syngen} and ToMe~\citep{tome} perform well on Object prompts of the form “a [color1] [object1] and a [color2] [object2],” but still exhibit overlapping or mixed concepts (rows 2, 3; columns 5, 6). 
In contrast, \textbf{{\name}} achieves stronger object binding in the Animals-Animals and Animals–Objects categories while also improving results on Objects-Objects. 

Figure~\ref{fig:qualitative_complex} extends qualitative results to complex prompts from the \textit{Complex} category of T2ICompbench and rare-concept prompts from Rarebench~\cite{r2f}, comparing \textbf{{\name}} with the top-performing SDXL models (from Table~\ref{tab:simple_prompts}). 
% The left half shows prompts involving spatial relations (e.g., “on,” “to the left of”), while the second row depicts more descriptive prompts with multiple interacting concepts. 
The left half shows prompts from T2ICompbench’s Complex category, which involves complex subject-attribute \& inter-concept relations. Only {\name} is able to faithfully generate all concepts and relations, while the other methods often miss concepts or fail to capture relations.
The right half shows prompts from RareBench, which involve rare or novel concepts. {\name} is able to generate extremely rare concepts such as \textit{"banana shaped bottle"} or \textit{"Knighted turtle"}, while the other methods often suffer from training data-bias -- generating a regular banana or human knight instead.

\textbf{Ablations Studies} are preformed on the key components of \textbf{{\name}}, starting from the base SDXL model (Table~\ref{tab:co3_ablation_main} \& Figure~\ref{fig:co3_ablation}). 
Adding resampling in the first three DDIM steps substantially improves average performance, particularly for the Animals and Animal–Objects categories. 
Since image layout is largely determined in the early timesteps, {\name}-resampler iteratively explores better initial noise configurations that align with the prompt. 
%%%%%%%%%%%%%%%%%%%%%%%%%%%%%%%%%%%%%%%%%%%%%%%
\begin{wrapfigure}{r}{0.42\textwidth}
    \vspace{-1.2em}
    \centering 
    \includegraphics[width=0.4\textwidth]{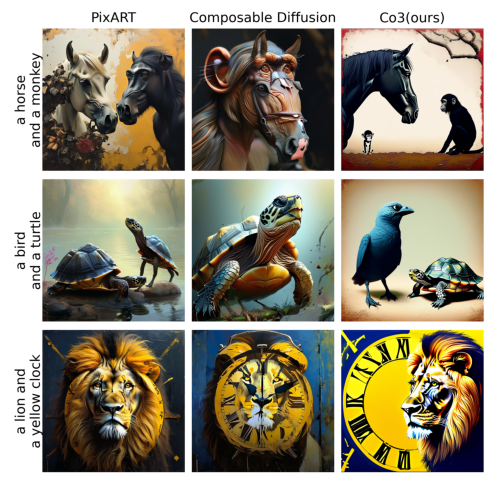}
    \captionof{figure}{\textit{Model Agnostic behavior}: Qualitative comparison of generation from PixART-$\Sigma$\citep{pixart} base diffusion model, PixART-$\Sigma$ + {\name}, and PixART-$\Sigma$ + Composable Diffusion (Comp-diff).}
    \label{fig:qualitative_pixart}
    \vspace{-1.2em}
\end{wrapfigure}
%%%%%%%%%%%%%%%%%%%%%%%%%%%%%%%%%%%%%%%%%%%%%%%%%%
Incorporating {\name}-corrector over the next steps further improves prompts requiring fine-grained details such as color or texture, boosting performance on the Objects category in both ImageReward and BLIP-VQA. 
%%%%%%%%%%%%%%%%%%%%%%%%%%%%%%%%%%%%%%%
\begin{wraptable}[10]{l}{0.42\textwidth}
    \vspace{-1.5em} % Kill top whitespace
    \centering
    \captionof{table}{Generalization of CO3 on PixArt and comparison with model-agnostic baselines. cDiff denotes Comp-diff~\citep{compose-diff}.}
    \label{tab:pixart_main}
    \tiny
    \setlength{\tabcolsep}{2.5pt} % Tighten columns to prevent horizontal cutoff
    \begin{tabular}{lcccccc}
    \toprule
    & \multicolumn{3}{c}{ImageReward $\uparrow$} & \multicolumn{3}{c}{BLIP-VQA $\uparrow$} \\
    \cmidrule(lr){2-4} \cmidrule(lr){5-7}
    Method & A-A & A-O & O-O & A-A & A-O & O-O \\
    \midrule
    PixArt & \seccolor 1.066 & \seccolor 1.697 & \seccolor 1.023 & \seccolor 0.691 & \firstcolor 0.861 & \seccolor 0.559 \\
    PixArt + cDiff & 0.917 & 0.047 & 0.339 & 0.208 & 0.591 & 0.472 \\
    PixArt + CO3 & \firstcolor 1.380 & \firstcolor 1.770 & \firstcolor 1.081 & \firstcolor 0.746 & \seccolor 0.864 & \firstcolor 0.581 \\
    \bottomrule
    \end{tabular}
    \vspace{-1.5em} % Kill bottom whitespace
\end{wraptable}
%%%%%%%%%%%%%%%%%%%%%%%%%%%%%%%%
Additional gains come from the Closeness-Aware Concept Weight Modulation strategy, which consistently benefits all categories. 
We hypothesize that as the diffusion trajectory evolves, the latent state often drifts closer to one concept mode $c_i$; the modulation mechanism assigns $c_i$ a more negative weight, repelling the latent and mitigating concept dominance~\citep{multit2izero} commonly observed in stable-diffusion. 

% \textcolor{blue}{
Including the Corrector alone already improves performance in categories where attributes must attach to specific objects. This is most clearly visible in Rows~4--5 of the ablation table: prompts of the form ``an [attribute1][object1] and [attribute2][object2]'' show large gains, because early-timestep dominance (one object absorbing both attributes) is the failure mode that the Corrector resolves.
% The Corrector specifically targets concept--attribute misalignment, so it is understandable that its performance is not pronounced in categories such as animals-animals (i.e., prompts of the form ``\texttt{a [animal1] and a [animal2]}''). Thus, improvements appear strongest in the categories that reflect attribute fidelity and multi-object separation.
Importantly, the average performance of \emph{SDXL + Corrector + weight-modulation} (last row) is very close to \emph{SDXL + Resampling + Corrector + weight-modulation} (Row 4), which indicates that the Corrector alone is powerful even without Resampling. However, the full improvement is achieved when the Corrector is applied after the Resampling steps.
% as weight-modulation together with the Corrector mitigates concept dominance and improves concept--attribute alignment.}

\textbf{Comparison with Optimization-Free Approaches:}
To further demonstrate \textbf{{\name}}’s model-agnostic nature and generalization to other base models, we evaluate it on PixArt-$\Sigma$~\citep{pixart} (Table~\ref{tab:pixart_main}). 
We compare against both the base model and the sampling-based, optimization-free Composable Diffusion (Comp-diff)~\citep{compose-diff}. 
Unlike Comp-diff, which composes diffusion chains within the DDIM prediction step, \textbf{{\name}} applies composition as a corrector. 
As shown in Fig.~\ref{fig:qualitative_pixart},
Comp-diff suffers from missing or mixing concepts, 
while \textbf{{\name}} consistently preserves all concepts with correct bindings in the output.

\section{Related Work}
\vspace{-0.1in}
% \subsection{Optimization based correction approaches:} 
% Optimization-based approaches for improving prompt adherence can broadly be divided into two categories: text-embedding optimization and attention-map optimization. 
\textbf{Optimization-based correction approaches:}
\textit{\textbf{(1) Text-embedding optimization}} works like T2I-Zero variants~\citep{multit2izero, magnet}, and style or prompt inversion~\citep{stylediffusion}, improve semantic alignment by optimizing or restructuring the text representation during inference. These methods tweak or invert token embeddings so that the denoising trajectory better reflects entity--attribute bindings or multi-concept prompts. \\
\textit{\textbf{(2) Attention-map optimization}} works~\citep{attn-regulation,conform,plugnplay,prompt2prompt} directly manipulate the cross- and self-attention maps during sampling. These methods typically inject constraints or losses that (i) boost token--region correspondence for each entity, (ii) reduce overlap between different concepts, or (iii) preserve early layout information across timesteps.  
Attend-and-Excite~\citep{ae} increases the activation of object tokens. Divide-and-Bind~\citep{dividebind} maximizes total variation to elicit distinct excitations for multiple objects and aligns attribute--entity attention maps, while A-STAR~\citep{astar} further reduces cross-token overlap and preserves early attention signals. SynGen~\citep{syngen} leverages syntactic parses to penalize mismatched attention overlaps, ensuring linguistic binding between entities and modifiers.  
% Other works propose explicit regulation of attention maps via constraints or energy-based objectives. 
InitNO~\citep{initno} optimizes the initial noise so that sampling begins in more favorable regions that yield stronger, less-conflicted attention.  
% These approaches are related to editing methods such as Prompt-to-Prompt and Plug-and-Play Diffusion, which highlighted the central role of attention control in enabling faithful text-driven manipulation.  

%%%%%%%%%%%%%%%%%%%%%%%%%%%%%%%%%%%%%%%%%%%%%

% \begin{wraptable}[10]{l}{0.42\textwidth}
%     \vspace{-1.5em} % Kill top whitespace
%     \centering
%     \captionof{table}{Generalization of CO3 on PixArt and comparison with model-agnostic baselines. cDiff denotes composable-diffusion~\citep{compose-diff}.}
%     \label{tab:pixart_main}
%     \tiny
%     \setlength{\tabcolsep}{2.5pt} % Tighten columns to prevent horizontal cutoff
%     \begin{tabular}{lcccccc}
%     \toprule
%     & \multicolumn{3}{c}{ImageReward $\uparrow$} & \multicolumn{3}{c}{BLIP-VQA $\uparrow$} \\
%     \cmidrule(lr){2-4} \cmidrule(lr){5-7}
%     Method & A-A & A-O & O-O & A-A & A-O & O-O \\
%     \midrule
%     PixArt & \seccolor 1.066 & \seccolor 1.697 & \seccolor 1.023 & \seccolor 0.691 & \firstcolor 0.861 & \seccolor 0.559 \\
%     PixArt + cDiff & 0.917 & 0.047 & 0.339 & 0.208 & 0.591 & 0.472 \\
%     PixArt + CO3 & \firstcolor 1.380 & \firstcolor 1.770 & \firstcolor 1.081 & \firstcolor 0.746 & \seccolor 0.864 & \firstcolor 0.581 \\
%     \bottomrule
%     \end{tabular}
%     \vspace{-1.5em} % Kill bottom whitespace
% \end{wraptable}

\textbf{Composable generation works:} These works view conditional diffusion models as energy/score functions, enabling algebraic composition.
Composed-Diffusion~\citep{compose-diff} frames score-composition under CFG and demonstrates test-time generalization, but suffers from concept mixing and missing. Follow-ups propose training-free and model-agnostic methods like energy-parameterized diffusion and Metropolis/MCMC-corrected samplers that markedly improve multi-condition generation~\citep{rrr}, but their performance is often poor \citep{ae,structured-diffusion}. On a different line, while \citet{superdiff} proposes a density estimation approach for composing diffusion chains for concept interpolation, \citet{r2f} interpolates distributions of frequent and rare-concepts for improved compositional generation.  \citet{tweediemix} uses a composition similar to our resampler but not in the context of a corrector. Instead they use multiple DDIM forward-backward steps to sample initial noise. 
% Another training-free method incorporates prompt structure through cross-attention manipulation, complementing score composition by improving attribute binding and reducing collisions~\citep{structured-diffusion}.
% Overall, composable approaches offer flexible, training-free control, yet robustness hinges on operator design and sampler choice—especially when concept independence is violated~\citep{rrr,compose-diff}.

%%%%%%%%%%%%%%%%%%%%%%%%%%%%%%%%%%%%%%%%%%%%%%%%%%%%

% \begin{wrapfigure}{r}{0.42\textwidth}
%     \vspace{-1.2em}
%     \centering 
%     \includegraphics[width=0.4\textwidth]{figs/pixart_small_prompt_image_grid_v7_lhs.png}
% %     \captionof{figure}{\textit{Model Agnostic behavior}: Qualitative comparison of generation from PixART-$\Sigma$\citep{pixart} base diffusion model, PixART-$\Sigma$ + {\name}, and PixART-$\Sigma$ + Composable Diffusion.}
%     \label{fig:qualitative_pixart}
%     \vspace{-1.2em}
% \end{wrapfigure}

%%%%%%%%%%%%%%%%%%%%%%%%%%%%%%%%%%%%%%%%%%%%%%%%%%%%

\textbf{Layout augmentation image generation:}
\textit{\textbf{(1) Layout-to-image methods:}}~\citep{box-diff,attention-refocus,dense-diffusion,loco-layout}  use a strategy with explicit spatial priors—-boxes, masks, or region texts to bridge text and image: training-free controllers steer attention so objects materialize in designated regions. 
% (e.g., BoxDiff aligns peaks of attention to user boxes; Attention Refocusing adds losses that realign attention to layouts; DenseDiffusion modulates attention with region cues; LoCo constrains attention locally to prevent bleeding).  
Other works add instance-level handles, for fine-grained placement and attributes across multiple entities~\citep{instance-diffusion}.  
In parallel, fine-tuning approaches inject layout channels into the backbone~\citep{gligen,t2iadapter,controlnet}.\\
% ; GLIGEN freezes the base model and gates grounding features; T2I-Adapter learns lightweight adapters for various controls; ControlNet appends trainable control branches—achieving stronger spatial fidelity at the cost of extra training.  
% Despite differing mechanisms, these methods share the goal of converting high-level prompts into \emph{layout-aware trajectories} that reduce entity collisions and missing objects while preserving image quality.
% ~\citep{box-diff,attention-refocus,dense-diffusion,gligen}.\\
\textit{\textbf{(2) LLM-augmented methods:}} Collectively, these works leverage LLM reasoning/representations to better bridge linguistic structure and the denoising trajectory by (i) decomposing complex prompts into regional sub-tasks~\citep{rpg,ella}
% that guide diffusion improving compositionality and long-text alignment~\citep{rpg}; 
(ii) infer layouts from text~\citep{layoutllm},
% so relations become spatial constraints before generation; 
and (iii) act as stronger text encoders or timestep-aware adapters
% that inject semantics into a frozen diffusion backbone
~\citep{imagen}. 

\vspace{-0.15in}
\section{Conclusion \& Future Work}
\vspace{-0.1in}
We propose a gradient-free, model-agnostic image composition method that switches between noise-resampling and correction. Our \textit{Concept Contrasting Corrector} is based on the hypothesis that composition is degraded by ``problematic" modes---those that overlap with individual concept modes, resulting in strong alignment with particular concepts while suppressing others. 
Our corrector employs a Closeness-Aware Weight Modulation scheme that emphasizes ``pure" modes where all concepts coexist without any dominating ones. 
% We analyze the role of composition weight selection in correction and noise-resampling regimes. 
Crucially, we attribute shortcomings of past composition methods to the choice of composition weights and show that only when the weights sum to $1$, the interpretation of Tweedie-mean correction and the CFG guidance strength are preserved. Results demonstrate that {\name} outperforms baseline approaches for several diffusion model choices and serves as a \textit{plug-and-play} module offering improved compositional generation performance.

While our corrector suppresses concept overlap, the model's predicted score landscape remains biased by training data quality, occasionally leading to poor alignment in unrealistic prompts (Fig.~\ref{fig:qualitative_failure}). 
While our work offers a significant boost to multi-concept compositional generation, failure cases still persist and  we believe advanced energy-based compositional samplers that explicitly model landscape probabilities can further mitigate these issues.
We intend to investigate these edge cases and explore applications beyond compositional generation in future work.

\clearpage

\subsubsection*{Acknowledgments}
Acknowledgments: This work was partially supported by NSF \#2008338, \#1909568, \#2148583, and \#MRI-2018966. This work used DELTA at NCSA through allocation CIS230230 from the Advanced Cyberinfrastructure Coordination Ecosystem: Services \& Support (ACCESS) program, which is supported by U.S. National Science Foundation grants \#2138259, \#2138286, \#2138307, \#2137603, and \#2138296
\subsection*{Reproducibility Statement}
All method implementations, inference pipelines are included in the supplementary material. This includes instructions for environment setup, dependencies, and reproducible random seeds. The datasets used in our experiments are publicly available. We
provide detailed descriptions of the dataset in Sec. 4. Hyperparameters, inference schedules, and evaluation
pipelines are described in Sec. 4, with further details in the Appendix~\ref{app:implementation_details}. 

\bibliography{iclr2026_conference}
\bibliographystyle{iclr2026_conference}

\clearpage
\appendix
\section{Appendix}
% \subsection{Broader Impacts}
\subsection{Broader Impacts}
{\name} improves compositional text-to-image generation by adjusting the sampling distribution of diffusion models. At the same time, it introduces certain risks. The method could be misused to create deceptive or misleading visuals, contributing to the spread of misinformation. When applied to depictions of public figures, it may compromise personal privacy. Moreover, the automatically produced content can raise concerns around copyright and intellectual property.

%------------------------------------------------------------------
%                           Proofs
%------------------------------------------------------------------

\subsection{Composition in Tweedie-denoised space for a valid cfg}
\label{app:composition_proof}
\textbf{Classifier free guidance:}
In CFG, to sample from $p(x|c)$, we compose the conditonal and unconditonal predicted noise at each time step $t$ as:
\begin{equation}
    \eps^{\lambda,  c}_t = \eps^{\phi}_t + \lambda (\eps^{c}_t - \eps^{\phi}_t)
    \label{eq:valid_cfg_composition}
\end{equation}
where $\eps^c_t $ is the noise predicted for the distribution $p_t(x_t|c)$ at time $t$, i.e., $\nabla_{x_t} \log p_t(x_t|c) = - \frac{\eps^{c}_t}{\sqrt{1-\alphabar_t}}$. Then we use this composed noise to sample the next step as, 
\begin{equation}
    \hat{x}^{\lambda, c}_{t,0} = \frac{x_t - \sqrt{1 - \bar{\alpha}_t}\eps^{\lambda, c}_t}{\bar{\alpha}_t}, \quad x_{t-1} = \sqrt{\bar{\alpha}_{t-1}}\hat{x}^{\lambda, c}_{t,0} + \sqrt{1 - \bar{\alpha}_{t-1}}\epsilon^{\lambda, c}_t
\end{equation}  
Let, $\hat{x}^{\lambda, c}_{tweedie} := x_t - \sqrt{1 - \alphabar_t}\eps^{\lambda, c}_t$ be the tweedie mean from CFG composed noise at time $t$. Then we can rewrite the above equation as,
\begin{equation}
    x_{t-1} = \frac{\sqrt{\bar{\alpha}_{t-1}}}{\sqrt{\alphabar_t}} \hat{x}^{\lambda, c}_{tweedie} + \sqrt{1 - \bar{\alpha}_{t-1}}\epsilon^{\lambda, c}_t
\end{equation}
\textbf{Composition of CFG scores:} \citep{compose-diff} introduced the idea composable-diffusion where they compose scores from different diffusion models or conditional distributions to generate images from the composed distribution. 
They assume that,   $p(x_0 | C) = \prod p(x_0|c_i)$ and proposed the modified CFG composition for each $t$ as,
\begin{equation}
    \Tilde{\eps}^{\lambda, C}_t = \eps^{\phi}_t + \lambda_i(\sum_i \eps^{c_i}_t - \eps^{\phi}_t) 
\end{equation}
The issue with above composition is that for arbitrary $\lambda_i$, the composed CFG noise doesn't satisfy the CFG equation in \eqref{eq:valid_cfg_composition}, i.e., $\Tilde{\eps}^{\lambda, C}_t \neq \eps^{\lambda, C}_t$ for any $\lambda$ in equation \eqref{eq:valid_cfg_composition} where $\eps^{\lambda, C}_t$ is the noise predicted for the distribution $p_t(x_t|C)$ at time $t$.

To conretize this, we state the following simple result. 
\begin{lemma}
    Let's define CFG as a function at time $t$ as, $f_{CFG}(\eps_t; \lambda) = $ $\eps_t^{\phi} + \lambda(\eps_t - \eps_t^{\phi})$. Then for any $\lambda$ and $K>1$, $\sum_k {w_k f_{CFG}(\eps_t^k; \lambda)} = f_{CFG}(\sum_k {w_k \eps_t^k}; \lambda)$ only if $\sum_k w_k = 1$. 
    \label{lemma:cfg_sum}
\end{lemma}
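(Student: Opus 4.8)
The plan is to reduce the claimed identity to a single coefficient comparison. Writing $W := \sum_k w_k$, I would expand the left-hand side using the definition of $f_{CFG}$,
\[
\sum_k w_k\, f_{CFG}(\eps^k;\lambda) \;=\; \sum_k w_k\bigl(\eps^{\phi} + \lambda(\eps^k - \eps^{\phi})\bigr) \;=\; W(1-\lambda)\,\eps^{\phi} \;+\; \lambda\!\sum_k w_k\,\eps^k ,
\]
and the right-hand side the same way,
\[
f_{CFG}\!\Bigl(\textstyle\sum_k w_k\eps^k;\,\lambda\Bigr) \;=\; \eps^{\phi} + \lambda\Bigl(\textstyle\sum_k w_k\eps^k - \eps^{\phi}\Bigr) \;=\; (1-\lambda)\,\eps^{\phi} \;+\; \lambda\!\sum_k w_k\,\eps^k .
\]

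Next I would subtract the two expressions. The terms $\lambda\sum_k w_k\eps^k$ cancel, leaving the clean factorization
\[
\sum_k w_k\, f_{CFG}(\eps^k;\lambda) \;-\; f_{CFG}\!\Bigl(\textstyle\sum_k w_k\eps^k;\,\lambda\Bigr) \;=\; (W-1)\,(1-\lambda)\,\eps^{\phi}.
\]
From here the lemma is immediate: the identity holds iff $(W-1)(1-\lambda)\,\eps^{\phi}=0$, and since CFG is applied with $\lambda\neq 1$ while the identity is required to hold for a generic (nonzero) unconditional prediction $\eps^{\phi}$ — equivalently, as an identity in the free vectors $\eps^{\phi},\eps^1,\dots,\eps^K$ — this forces $W-1=0$, i.e. $\sum_k w_k = 1$. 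The converse is visible from the same display, since $W=1$ makes the difference vanish for every $\lambda$.

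I do not expect a genuine obstacle here: the argument is a one-line algebraic identity. The only point requiring care is the quantifier reading of the statement — ``$\sum_k w_k f_{CFG} = f_{CFG}(\sum_k w_k\eps^k)$'' must be understood as holding for all admissible noise vectors (or at least for one configuration with $\lambda\neq1$ and $\eps^{\phi}\neq0$), not merely by coincidence for some special tuple. Under that reading the factorization $(W-1)(1-\lambda)\eps^{\phi}$ makes the ``only if'' transparent, and this is exactly the sense relevant to the composable-diffusion discussion, where $\eps^{\phi}$ is an arbitrary model output.
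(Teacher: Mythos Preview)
Your proof is correct and follows essentially the same approach as the paper: both expand $\sum_k w_k f_{CFG}(\eps^k;\lambda)$ directly and compare with $f_{CFG}(\sum_k w_k\eps^k;\lambda)$. Your version is in fact a bit tidier, since you compute and factor the difference $(W-1)(1-\lambda)\eps^{\phi}$ explicitly, which makes the ``only if'' direction (and the implicit genericity assumptions on $\lambda$ and $\eps^{\phi}$) transparent, whereas the paper simply asserts that step.
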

\begin{proof}
    Let, $\tilde{\eps}^{\lambda}_t = \sum_k {w_k f_{CFG}(\eps_t^k; \lambda)}$. Then,
    \begin{align*}
         \tilde{\eps}^{\lambda}_t 
          &= \sum_k w_k (\eps_t^{\phi} + \lambda(\eps_t^k - \eps_t^{\phi})) \\
          &= \sum_k w_k \eps_t^{\phi} + \lambda(\sum_k w_k \eps_t^k - \sum_k w_k \eps_t^{\phi}) \\
          &= \eps_t^{\phi} \sum_k w_k + \lambda(\sum_k w_k \eps_t^k - \eps_t^{\phi} \sum_k w_k) \\
            &= \eps_t^{\phi} + \lambda(\sum_k w_k \eps_t^k - \eps_t^{\phi}) \quad \text{if } \sum_k w_k = 1 \\ 
            &= f_{CFG}(\sum_k w_k \eps_t^k; \lambda) \quad \text{if } \sum_k w_k = 1 
    \end{align*}
If $\sum_k w_k \neq 1$, then $\tilde{\eps}^{\lambda}_t \neq f_{CFG}(\sum_k w_k \eps_t^k; \lambda)$.
\end{proof}

% \cyanbox{
\textcolor{blue}{Remark:} One immediate implication of this lemma is that, whenever we sample from a composed distribution with the CFG composed score of the form $\Tilde{\eps}^{\lambda} = w_1 \eps^{\lambda, 1} + w_2 \eps^{\lambda, 2} + \dots + w_K \eps^{\lambda, K}$, then $\Tilde{\eps}^{\lambda}$ corresponds to the CFG noise for the distribution $p(x|C) = \prod_i p(x \mid c_i)^{w_i}$ only when $\sum_i w_i = 1$.
% }

\begin{proposition}(Restated) 
    Let $\xhat_{tweedie}[\eps_t^{\lambda, c}]:=x_t - \sigma_t \ \eps_t^{\lambda, c}$ be the tweedie mean from CFG composed noise $\tilde{\eps_t}^{\lambda} = \eps_t^{\phi} + \lambda(\eps_t^{C} - \eps_t^{\phi})$ for some $\lambda$. Let, $\Tilde{\xhat}_{tweedie}$ be the composed tweedie-mean defined as 
    $\Tilde{\xhat}_{tweedie} = \sum_k w_k  \xhat_{tweedie}[\eps_t^{\lambda, c_k}]$. 
    Then,
    \vspace{-0.1in}
    \begin{enumerate}[label=\alph*)]
        \item {\name}-corrector: $\Tilde{\xhat}_{tweedie}$ can be expressed in the form of a tweedie-mean at time $t$, i.e. $\Tilde{\xhat}_{tweedie} = x - \sigma_t \ \tilde{\eps}_t^{\tilde{\lambda}, C}$  if and only if $\sum_k w_k = 1$. Here $\tilde{\lambda}=\lambda$ and CFG composed noise $\tilde{\eps}_t^{\Tilde{\lambda}, C}= \eps_t^{\phi} + \lambda(\sum_k w_k \eps_t^{c_k} - \eps_t^{\phi})$. 
        \item {\name}-resampler: $\Tilde{\xhat}_{tweedie} = -\lambda \ \sigma_t \ \sum_k  w_k \eps_t^{c_k}$ is weighted noise if and only if $\sum_k w_k =0$.
    \end{enumerate}    \label{app_lemma:tweedie_sum}
\end{proposition}
\vspace{-0.05in}
 
\begin{proof}
    a) The proof can be obtained by simplifying the $\xhat^{comp}_{tweedie}$ expression and using direct application of Lemma \ref{lemma:cfg_sum}. We have,
    \begin{align*}
        \tilde{\xhat}_{tweedie} &= \sum_k w_k \xhat^{\lambda, c_k}_{tweedie} \\
        &= \sum_k w_k (x_t - \sigma_t \ \eps_t^{\lambda, c_k}) \\
        &= x_t \sum_k w_k - \sigma_t \sum_k w_k \eps_t^{\lambda, c_k} \\
        &= x_t \sum_k w_k - \sigma_t \sum_k f_{CFG}(\eps_t^{c_k}) \quad (\text{ from Lemma \ref{lemma:cfg_sum}})\\
        &= x_t  - \sigma_t \ f_{CFG}(\sum_k w_k \eps_t^{c_k}) \quad \text{if } \sum_k w_k = 1 
        % \vspace{-1cm}
    \end{align*}
    So, we have $\Tilde{\xhat}_{tweedie} = x_t - \sigma_t \ \tilde{\eps_t}^{\lambda, C}$ where $\tilde{\eps_t}^{\lambda, C} = f_{CFG}(\sum_k w_k \eps_t^{c_k}) = \eps_t^{\phi} + \lambda(\sum_k w_k \eps_t^{c_k} - \eps_t^{\phi})$. If $\sum_k w_k \neq 1$, then $\Tilde{\xhat}_{tweedie}$ cannot be expressed in the form $x_t - \sigma_t \ \tilde{\eps_t}^{\lambda, C}$ for any $\lambda$ and $\tilde{\eps_t}^{\lambda, C}$.\\

b) From the $4$-th equality of part a)
\begin{align*}
    \Tilde{\xhat}_{tweedie} &= x_t \sum_k w_k - \sigma_t \sum_k w_k \eps_t^{\lambda, c_k} \\
    &= - \sigma_t \lambda \sum_{k=1}^K w_k \eps_t^k \text{ (from Lemma \ref{lemma:cfg_sum})}
\end{align*}
\end{proof}

\subsection{Comparison of compositions in score space}
\label{app:compose_diff-sum_1}
In this section we compare our composition framework with Composable-Diffusion. 
As already described in \secref{sec:co3}, we propose composition in Tweedie-denoised space as 
\begin{align}
    \tilde{x}_{tweedie} =w_0\,\hat{x}_{tweedie}[\epsilon_t^{\lambda, C}]\;+\; w_1 \,\hat{x}_{tweedie}[\epsilon_t^{\lambda, c_1}] \;+\; \dots \;+\; w_K \,\hat{x}_{tweedie}[\epsilon_t^{\lambda, c_K}] \\
\end{align}
which leads to 
\begin{align}
    \tilde{x}_{tweedie} &= x - \sigma_t \ \tilde{\eps}_t^{\tilde{\lambda}, C} \text{ iff } \sum w_{k=0}^K = 1 \\ 
    \text{ where   } \tilde{\eps}_t^{\Tilde{\lambda}, C}&= \eps_t^{\phi} + \lambda(\sum_k w_k \eps_t^{c_k} - \eps_t^{\phi}) \label{eq:co3_noise_comp}
\end{align}

% {\color{blue}
Thus, when mixing scores from different conditional distributions, the above condition need to be satisfied for $\tilde{x}_{tweedie}$ to correspond to any coherent joint conditional model. Contrast this with the Composable-Diffusion's  (\citet{compose-diff}) noise/score composition:
\begin{align}
    \tilde{\epsilon}^{\lambda, C}_{t, compdiff} \;=\; \epsilon_t^{\phi} + \lambda_1 \bigl(\epsilon_t^{c_1} - \epsilon_t^{\phi}\bigr) + \lambda_2 \bigl(\epsilon_t^{c_2} - \epsilon_t^{\phi}\bigr) + \dots + \lambda_K \bigl(\epsilon_t^{c_K} - \epsilon_t^{\phi}\bigr) 
\end{align}

The above composition allows arbitrary weights, which in general violate the unit-sum constraint. In such cases, the result is simply a heuristic linear combination of scores, and the resultant \emph{Tweedie mean isn't valid} and doesn't correspond to any well-defined distribution under the given forward process. 

By contrast, CO3 composes Tweedie-means from different conditional distributions. Lemma 1 shows that these weights have to satisfy the same unit-sum condition as above for the resultant composed Tweedie-mean to be a valid one.  CO3 maintains this condition and  this preserves the update on the Tweedie manifold of the underlying sampler. 

%------------------------------------------------------------------
%                           Algorithms
%------------------------------------------------------------------
% \subsection{Algorithms for Resampling and Corrector}
\vspace{-0.1cm}
\subsection{More Implementation Details}
\label{app:implementation_details}
\subsubsection{{\name} Implementation Details}
For the results in Table~\ref{tab:simple_prompts} we use $T_c=10$(number of time-steps to correct), $T_r=3$(number of resampling steps) and $P=5$(number of corrector iteration). For concept aware weight modulation, we use exponential kernel with $\beta=0.8$ while anchoring $w_0$ at $1.0$ for {\name}-resampler and $2.0$ for {\name}-corrector.

We use Stanza~\citep{stanza} to parse the prompts. We parse the prompts to extract different noun chunks and filter each of them to remove articles and adjectives. The remaining proper noun is used as concept in {\name}. For example, if $C$ is "a black cat and a brown dog", we consider $c_1=$"cat" and $c_2=$"dog".
\subsubsection{Baseline Methods}
Attend-Excite~\citep{ae}, Divide-Bind~\citep{dividebind}, InitNo~\citep{initno}, SynGen~\citep{syngen} and Composable Diffusion~\citep{compose-diff} are methods based on SD1.5. We run experiments on these models using their publicly available code. Tweediemix~\citep{tweediemix}, Magnet~\cite{magnet}, R2F~\citep{r2f} and ToMe~\citep{tome} are SDXL based we use their publicly available code-base implementation. We also adapted Composable Diffusion and SynGen to SDXL. 

% \\
\begin{wraptable}{r}{0.45\textwidth}
    \centering
    \caption{Human Evaluation results on Attend-Excite prompts. Higher the score, the better.}
    \label{tab:human_eval_ae}
    \tiny
    \begin{tabular}{lccc}
    \toprule
    Method & Animals & \makecell{Animals-\\Objects} & Objects \\
    \midrule
    SDXL   & 0.5266 & 0.6050 & 0.7083 \\
    SynGen & 0.3667 & 0.4366 & 0.7183 \\
    Magnet & 0.3700 & 0.5683 & 0.6849 \\
    ToMe   & 0.4383 & 0.6083 & 0.6166 \\
    CO3(ours)    & 0.8817 & 0.7466 &  0.8250 \\
    \bottomrule
    \end{tabular}
\end{wraptable}
\subsubsection{Additional Human Evaluation}
We conducted new human-evaluation studies (with 11 participants) across all benchmarks considered in this paper. For evaluation, we follow the protocol from R2F~\cite{r2f}. For each prompt, images from {\name} and baseline methods are displayed side by side to enable direct comparison. Method names are anonymized and their order is randomly shuffled to reduce bias. Participants provide ratings in $\{0,1,2,3,4,5\}$, which are then normalized to $[0,1]$. Results are shown below (\textbf{higher is better}). Results on T2ICompbench and RareBench are included in Table~\ref{tab:combined_results} of the main text. Table~\ref{tab:human_eval_ae} shows results on Attend-Excite prompts.

\vspace{1em}

\vspace{1em}

\subsection{Generalization to higher-order solvers and low inference steps regimes}

In addition to the main 50-step SDXL with DDIM experiments, we include results using both (i) low-step DDIM and (ii) low-step high-order DPM++ solvers (10--20 steps).

Please note that, since DDIM sampling has direct connections with denoising and generation via the Tweedie mean (Section~2 in the main paper), {\name} applies naturally to DDIM. However, our experiments below show that {\name} continues to provide consistent improvements under higher-order solvers and aggressive sampling budgets as shown in Table~\ref{tab:dpmpp_low_steps} \&  Table~\ref{tab:ddim_low_steps}.

\vspace{1em}

\begin{table}[h]
\centering
\tiny
\caption{Performance of {\name} with DPM++ solver and in the low inference steps regime.}
\label{tab:dpmpp_low_steps}

\begin{tabular}{l c c p{1.1cm} c c p{1.1cm} c c}
\toprule
& Steps & \multicolumn{3}{c}{ImageReward $\uparrow$} & \multicolumn{3}{c}{BLIP-VQA $\uparrow$} & Avg $\uparrow$ \\
\cmidrule(lr){3-5} \cmidrule(lr){6-8}
Method 
& 
& Animals 
& \makecell{Animals\\Objects} 
& Objects 
& Animals 
& \makecell{Animals\\Objects} 
& Objects 
&  \\
\midrule
SD-DDIM     & 50 & 0.7820 & 1.5574 & 0.6789 & 0.6950 & 0.8658 & 0.4925 & 0.8452 \\
CO3-DDIM    & 50 &  1.2341 &  1.6743 &  1.0158 &  0.7441 &  0.8878 &  0.5146 &  1.0112 \\
\midrule
SD-DPM++    & 20 & 0.7089 & 1.5652 & 0.6757 & 0.6974 & 0.8537 &  0.4984 & 0.8332 \\
CO3-DPM++   & 20 &  1.1424 & 1.5249 &  0.9537 & 0.7382 &  0.8842 & 0.4944 &  0.9563 \\
\midrule
SD-DPM++    & 10 & 0.6834 & 1.5074 & 0.6681 & 0.6768 & 0.8537 & 0.4974 & 0.8145 \\
CO3-DPM++   & 10 & 1.0371 & 1.4520 & 0.9206 & 0.7331 & 0.8765 & 0.4443 & 0.9106 \\
\bottomrule
\end{tabular}
\end{table}

\vspace{1em}

\begin{table}[h]
\centering
\tiny
\caption{Performance of {\name} with DDIM solver in the low inference steps regime.}
\label{tab:ddim_low_steps}

\begin{tabular}{l c c p{1.1cm} c c p{1.1cm} c c}
\toprule
& & \multicolumn{3}{c}{ImageReward $\uparrow$} & \multicolumn{3}{c}{BLIP-VQA $\uparrow$} & Avg $\uparrow$ \\
\cmidrule(lr){3-5} \cmidrule(lr){6-8}
Method 
& Steps
& Animals 
& \makecell{Animals\\Objects}
& Objects
& Animals
& \makecell{Animals\\Objects}
& Objects
& \\
\midrule
SD w/ DDIM    & 50 & 0.7820 & 1.5574 & 0.6789 & 0.6950 & 0.8658 & 0.4926 & 0.8452 \\
CO3 w/ DDIM   & 50 & 1.2341 & 1.6743 & 1.0158 & 0.7441 & 0.8878 & 0.5146 & 1.0112 \\
\midrule
SD w/ DDIM    & 20 & 0.6965 & 1.4567 & 0.5516 & 0.6832 & 0.8620 & 0.4947 & 0.7907 \\
CO3 w/ DDIM   & 20 & 0.8532 & 1.6440 & 0.7481 & 0.6698 & 0.8611 & 0.4315 & 0.8671 \\
\midrule
SD w/ DDIM    & 10 & 0.4906 & 1.1503 & 0.1131 & 0.6428 & 0.8378 & 0.4752 & 0.6183 \\
CO3 w/ DDIM   & 10 & 0.5872 & 1.5900 & 0.7117 & 0.6294 & 0.8589 & 0.4453 & 0.8037 \\
\bottomrule
\end{tabular}
\end{table}

\vspace{1em}

\subsection{PROMPTS WITH AND WITHOUT BACKGROUNDS}
\begin{figure}[t]
    \centering
    \includegraphics[width=0.55\textwidth]{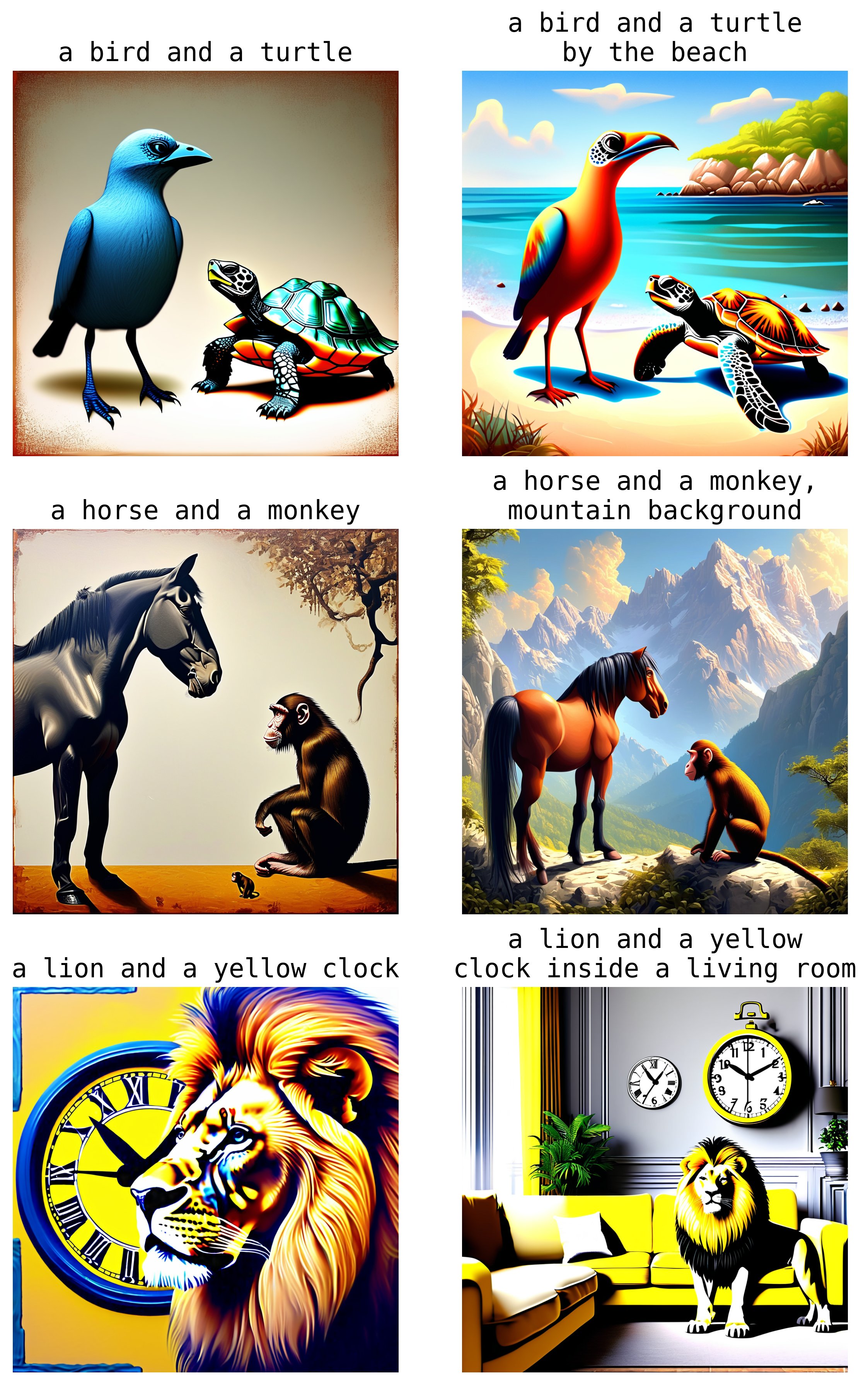} 
    \caption{Concept obedient behavior of {\name}: CO3 tends to strongly obey the concepts specified in the prompt; when
the background is not specified it is conservative with hallucinating the background.}
    \label{fig:background_ablation}
\end{figure}
\vspace{1em}

 CO3 is strongly concept obedient. Many CO3 generated images from Attend-Excite prompts seem to lack background details. To investigate this behavior, we added background concepts to the same prompts shown in Figure~\ref{fig:qualitative_pixart} of the main text. Figure~\ref{fig:background_ablation} shows the resulated images from this experiment. We observe that,
 CO3 tends to strongly obey the concepts specified in the prompt; when the background is not specified it is conservative with hallucinating the background. However, if the background is explicitly specified, CO3 brings the same degree of richness and quality to both the foreground and the background.

\subsection{Time and Memory Complexity}
\begin{table}[htb]
\centering
\scriptsize
\caption{Runtime and memory comparison between SDXL, ToMe, and CO3 at different sampling budgets.}
\label{tab:runtime_memory_comparison}
\begin{tabular}{lcccccccccc}
\toprule
& \multicolumn{3}{c}{ImageReward $\uparrow$} 
& \multicolumn{3}{c}{BLIP-VQA $\uparrow$} 
& Avg $\uparrow$ 
& Time(s) $\downarrow$ 
& Memory $\downarrow$ \\
\cmidrule(lr){2-4} \cmidrule(lr){5-7}
Method 
& Animals 
& \makecell{Animals\\Objects} 
& Objects 
& Animals 
& \makecell{Animals\\Objects} 
& Objects 
& & & \\
\midrule
SDXL (50 steps)   & 0.4906 & 1.1503 & 0.1131 & 0.6428 & 0.8378 & 0.4752 & 0.6183 & 7.20  & 8.5GB \\
ToMe (50 steps)   & 0.3894 & 1.5736 & 1.0117 & 0.6257 & 0.8807 & 0.6440 & 0.8542 & 16.58 & 18GB \\
CO3 (50 steps)    & 1.2341 & 1.6743 & 1.0158 & 0.7440 & 0.8877 & 0.5145 & 1.0120 & 19.90 & 9.5GB \\
\midrule
CO3 (20 steps)    & 0.8532 & 1.6440 & 0.7481 & 0.6698 & 0.8611 & 0.4315 & 0.8679 & 10.65 & 9.5GB \\
\bottomrule
\end{tabular}
\end{table}
We compare {\name} with its most competitive baseline ToMe~\citet{tome} in terms of runtime peak memory and time requirement to generate one sample in Table~\ref{tab:runtime_memory_comparison}. The experiments are conducted on a NVIDIA A100 gpu.

Although {\name} takes more time compared to ToMe with 50 time-steps, it requires significantly less memory. {\name} is lightweight, and its peak memory usage is only slightly above that of the base SDXL model. ToMe, being a gradient-based method, has substantial memory requirements.

We also emphasize that {\name} is model-agnostic and can be applied to \emph{any} diffusion-based model. In contrast, ToMe is model-architecture dependent, as its attribute-binding mechanism operates by manipulating internal attention activations of the base model.

For a fair comparison, we additionally tested {\name} with 20 inference steps. As shown in the table below, the average performance of CO3 with 20 steps remains comparable to, and in many cases better than, ToMe with 50 steps. {\name} with $20$ steps takes $10.65$ seconds, which is faster than ToMe.

\subsection{Ablations on Hyperparameters}
In this section, we analyze the contribution of the following five factors to the performance of our {\name} corrector.

\noindent Notation recap: $\beta$ is the exponential decay factor in the \textit{affinity scores}; $\lambda$ scales the \textit{composed\_score} for CFG; \texttt{num\_resampling} is the number of resampling steps at the start of diffusion; \texttt{num\_ts} is the number of early timesteps where the corrector is used; \texttt{num\_steps} is the number of iterations per corrector application.
%the exponential decay factor, the guidance scale, the number of resampling steps, the number of early timesteps at which the corrector is applied, and the number of corrector iterations.

\subsubsection{Number of resampling steps}

Table~\ref{tab:numresampling_pairs} shows results for different resampling steps. We observe that using more resampling steps can blur concept separation early in the diffusion process rather than helping to disentangle concepts.

% -------------------- NUM_RESAMPLING SWEEP (pairs; only num_resampling changes) --------------------

\begin{table*}[htbp]
\centering
\caption{\textbf{\texttt{num\_resampling}} sweep. Within each group, only \texttt{num\_resampling} changes; all other settings are identical.}
\label{tab:numresampling_pairs}
\small
\begin{adjustbox}{max width=\textwidth}
\begin{tabular}{p{4.5cm} c c ccc ccc}
\toprule
\multirow{2}{*}{Frozen settings} & \multirow{2}{*}{num\_resampling} & & \multicolumn{3}{c}{\textbf{ImageReward} (↑)} & \multicolumn{3}{c}{\textbf{BLIP-VQA} (↑)} \\
\cmidrule(lr){4-6} \cmidrule(lr){7-9}
 &  &  & Animals & Animals\_Objects & Objects & Animals & Animals\_Objects & Objects \\
\midrule
\multirow{2}{*}{\textit{$\beta=0.8$, num\_ts=7, $\lambda=0.8$, num\_steps=5}} 
 & 3 & & 1.2218 & 1.7020 & 0.9405 & 0.7375 & 0.8811 & 0.4782 \\
 & 4 & & 1.1705 & 1.7031 & 0.9512 & 0.7259 & 0.8828 & 0.4696 \\
\addlinespace
\multirow{2}{*}{\textit{$\beta=0.3$, num\_ts=7, $\lambda=0.8$, num\_steps=5}}
 & 3 & & 1.1484 & 1.6915 & 0.9005 & 0.7240 & 0.8810 & 0.4691 \\
 & 4 & & 1.1297 & 1.6948 & 0.8996 & 0.7145 & 0.8774 & 0.4697 \\
\addlinespace
\multirow{2}{*}{\textit{$\beta=1.1$, num\_ts=7, $\lambda=0.8$, num\_steps=5}}
 & 3 & & 1.2566 & 1.7062 & 0.9916 & 0.7422 & 0.8813 & 0.4834 \\
 & 4 & & 1.1927 & 1.7164 & 0.9934 & 0.7292 & 0.8835 & 0.4811 \\
\bottomrule
\end{tabular}
\end{adjustbox}
\end{table*}

\subsubsection{Corrector application timesteps}
In {\name}, the corrector is applied during the first \texttt{num\_ts} diffusion steps. Table~\ref{tab:numts_fixed_beta11_lmda08} shows that larger \texttt{num\_ts} generally yields better results, especially for \emph{animals} and \emph{objects}. This indicates the corrector remains beneficial beyond only the earliest timesteps.

% -------------------- NUM_TS SWEEP (others fixed) --------------------
\begin{table*}[htbp]
\centering
\caption{\textbf{\texttt{num\_ts}} sweep with all other settings fixed: $\beta=1.1$, \texttt{num\_resampling}=3, \texttt{num\_steps}=5, $\lambda=0.8$. (Best values are bold)}
\label{tab:numts_fixed_beta11_lmda08}
\small
\begin{adjustbox}{max width=\textwidth}
\begin{tabular}{l c c ccc ccc}
\toprule
\multirow{2}{*} &\multirow{2}{*}{num\_ts} & & \multicolumn{3}{c}{\textbf{ImageReward} (↑)} & \multicolumn{3}{c}{\textbf{BLIP-VQA} (↑)} \\
\cmidrule(lr){4-6} \cmidrule(lr){7-9}
 &  &  & Animals & Animals\_Objects & Objects & Animals & Animals\_Objects & Objects \\
\midrule
 & 4  & & 1.1862 & 1.7136 & 0.8885 & 0.7430 & 0.8783 & 0.4630 \\
 & 6  & & 1.2510 & \textbf{1.7117} & 0.9750 & 0.7432 & 0.8801 & 0.4812 \\
 & 7  & & 1.2566 & 1.7062 & 0.9916 & 0.7422 & 0.8813 & 0.4834 \\
 & 10 & & \textbf{1.3149} & 1.6816 & \textbf{1.0095} & \textbf{0.7576} & \textbf{0.8832} & \textbf{0.5074} \\
\bottomrule
\end{tabular}
\end{adjustbox}
\end{table*}

\subsubsection{Corrector iterations}
Table~\ref{tab:numsteps_triples} reports the effect of increasing the number of corrector iterations. Despite setting \texttt{num\_ts}=4, raising \texttt{num\_steps} reduces performance substantially, suggesting that applying the corrector only at the very beginning is not sufficient; employing it until much later in the diffusion trajectory is more effective.

% -------------------- NUM_STEPS SWEEP (triples; only num_steps changes) --------------------
\begin{table*}[htbp]
\centering
\caption{\textbf{\texttt{num\_steps}}. Within each group, only \texttt{num\_steps} changes; all other settings are identical.}
\label{tab:numsteps_triples}
\small
\begin{adjustbox}{max width=\textwidth}
\begin{tabular}{p{3.5cm} c c ccc ccc} % <-- changed first col from l to p{4.5cm}
\toprule
\multirow{2}{*}{Fixed settings} & \multirow{2}{*}{num\_steps} & & \multicolumn{3}{c}{\textbf{ImageReward} (↑)} & \multicolumn{3}{c}{\textbf{BLIP-VQA} (↑)} \\
\cmidrule(lr){4-6} \cmidrule(lr){7-9}
 &  &  & Animals & Animals\_Objects & Objects & Animals & Animals\_Objects & Objects \\
\midrule
\multirow{3}{=}{\textit{(a) $\beta=1.1$, num\_ts=4, num\_resampling=2, $\lambda=0.9$}} 
 & 7  & & \textbf{1.2622} & \textbf{1.7129} & \textbf{0.9959} & \textbf{0.7542} & \textbf{0.8843} & \textbf{0.4707} \\
 & 10 & & 1.2452 & 1.6994 & 0.9753 & 0.7448 & 0.8809 & 0.4706 \\
 & 15 & & 1.2151 & 1.6701 & 0.9437 & 0.7318 & 0.8781 & 0.4556 \\
\addlinespace
\multirow{3}{=}{\textit{(b) $\beta=1.1$, num\_ts=4, num\_resampling=2, $\lambda=0.8$}} 
 & 7  & & \textbf{1.2755} & \textbf{1.7049} & \textbf{0.9424} & \textbf{0.7538} & 0.8762 & 0.4611 \\
 & 10 & & 1.2503 & 1.6969 & 0.9191 & 0.7446 & \textbf{0.8798} & \textbf{0.4617} \\
 & 15 & & 1.2480 & 1.6658 & 0.9063 & 0.7356 & 0.8750 & 0.4577 \\
\bottomrule
\end{tabular}
\end{adjustbox}
\end{table*}

\subsubsection{Exponential decay factor $\beta$ of \textit{affinity scores}}
We vary $\beta$ in ~\eqref{eqn:weight_decay} from 0.3 to 1.1. Tables~\ref{tab:beta_fixed_ts6_lmda08} and \ref{tab:beta_fixed_ts6_lmda09} demonstrate that the performance generally increases with larger $\beta$, with the strongest gains in the \emph{animals} and \emph{objects} categories, while a few settings exhibit minor regressions.

\begin{table*}[htbp]
\centering
\caption{\textbf{Exponential decay factor $\beta$} with all other settings fixed: \texttt{num\_resampling}=3, \texttt{num\_steps}=5, $\lambda=0.8$. We report two blocks: \texttt{num\_ts}=6 and \texttt{num\_ts}=7. (Best values for each block are bold)}
\label{tab:beta_fixed_ts6_lmda08}
\tiny
\begin{adjustbox}{max width=\textwidth}
\begin{tabular}{l c c ccc ccc} % first column wrapped
\toprule
\multirow{2}{*}{num\_ts} & \multirow{2}{*}{$\beta$} & & \multicolumn{3}{c}{\textbf{ImageReward} (↑)} & \multicolumn{3}{c}{\textbf{BLIP-VQA} (↑)} \\
\cmidrule(lr){3-9} \cmidrule(lr){4-6} \cmidrule(lr){7-9}
 &  &  & Animals & Animals\_Objects & Objects & Animals & Animals\_Objects & Objects \\
\midrule
\multirow{7}{*}{6} 
 & 0.3 & & 1.1458 & 1.7016 & 0.8801 & 0.7262 & 0.8793 & 0.4613 \\
 & 0.6 & & 1.1945 & 1.7069 & 0.9010 & 0.7317 & 0.8786 & 0.4645 \\
 & 0.7 & & 1.2043 & 1.7109 & 0.9152 & 0.7310 & 0.8800 & 0.4706 \\
 & 0.8 & & 1.2157 & 1.7024 & 0.9156 & 0.7398 & 0.8776 & 0.4751 \\
 & 0.9 & & 1.2305 & 1.7146 & 0.9511 & 0.7400 & 0.8809 & 0.4766 \\
 & 1.0 & & 1.2282 & \textbf{1.7183} & 0.9596 & 0.7350 & \textbf{0.8808} & 0.4770 \\
 & 1.1 & & \textbf{1.2510} & 1.7117 & \textbf{0.9750} & \textbf{0.7432} & 0.8801 & \textbf{0.4811} \\
\midrule
\multirow{3}{*}{7} 
 & 0.3 & & 1.1484 & 1.6915 & 0.9005 & 0.7240 & 0.8809 & 0.4691 \\
 & 0.8 & & 1.2218 & 1.7020 & 0.9405 & 0.7375 & 0.8811 & 0.4782 \\
 & 1.1 & & \textbf{1.2566} & \textbf{1.7062} & \textbf{0.9916} & \textbf{0.7422} & \textbf{0.8813} & \textbf{0.4834} \\
\bottomrule
\end{tabular}
\end{adjustbox}
\end{table*}

% -------------------- BETA SWEEP (others fixed) --------------------

% -------------------- BETA SWEEP (lambda=0.9) --------------------
\begin{table*}[h]
\centering
\caption{\textbf{Exponential decay factor $\beta$} with all other settings fixed: \texttt{num\_resampling}=3, \texttt{num\_steps}=5, $\lambda=0.9$. We report two blocks: \texttt{num\_ts}=6 and \texttt{num\_ts}=7. (Best values for each block are bold.)}
\label{tab:beta_fixed_ts6_lmda09}
\tiny
\begin{adjustbox}{max width=\textwidth}
\begin{tabular}{l c c ccc ccc} % first column wrapped
\toprule
\multirow{2}{*}{num\_ts} & \multirow{2}{*}{$\beta$} & & \multicolumn{3}{c}{\textbf{ImageReward} (↑)} & \multicolumn{3}{c}{\textbf{BLIP-VQA} (↑)} \\
\cmidrule(lr){3-9} \cmidrule(lr){4-6} \cmidrule(lr){7-9}
 &  &  & Animals & Animals\_Objects & Objects & Animals & Animals\_Objects & Objects \\
\midrule
\multirow{5}{*}{6} 
 & 0.3 & & 1.1437 & 1.7020 & 0.9518 & 0.7248 & 0.8816 & 0.4764 \\
 & 0.7 & & 1.2051 & 1.7130 & 0.9860 & 0.7407 & 0.8851 & 0.4788 \\
 & 0.8 & & 1.2037 & 1.7110 & 0.9916 & 0.7388 & 0.8832 & \textbf{0.4894} \\
 & 0.9 & & 1.2003 & 1.7159 & 0.9883 & \textbf{0.7416} & \textbf{0.8842} & 0.4891 \\
 & 1.1 & & \textbf{1.2098} & \textbf{1.7180} & \textbf{0.9956} & 0.7371 & 0.8841 & 0.4855 \\
\midrule
\multirow{3}{*}{7} 
 & 0.3 & & 1.1273 & 1.6938 & 0.9490 & 0.7274 & 0.8848 & 0.4796 \\
 & 0.8 & & 1.2142 & 1.7000 & 0.9937 & 0.7363 & 0.8845 & \textbf{0.4952} \\
 & 1.1 & & \textbf{1.2331} & \textbf{1.7046} & \textbf{1.0148} & \textbf{0.7496} & \textbf{0.8864} & 0.4945 \\
\bottomrule
\end{tabular}
\end{adjustbox}
\end{table*}

\subsubsection{{\name} Failure Cases}

Despite correcting for the ``problematic" modes, there still remain open challenges in image composition as shown in Figure~\ref{fig:qualitative_failure}. The score and, hence, the correction landscape is heavily influenced by the training schemes employed in diffusion model training, i.e., the quantity and quality of the multi-concept bindings in the training set. In addition, the usage of unrealistic prompts perhaps not encountered in training also results in poor text/concept alignment. We leave this investigation for the future. 
\begin{figure*}[htbp]
    \centering
    \includegraphics[width=0.7\textwidth]{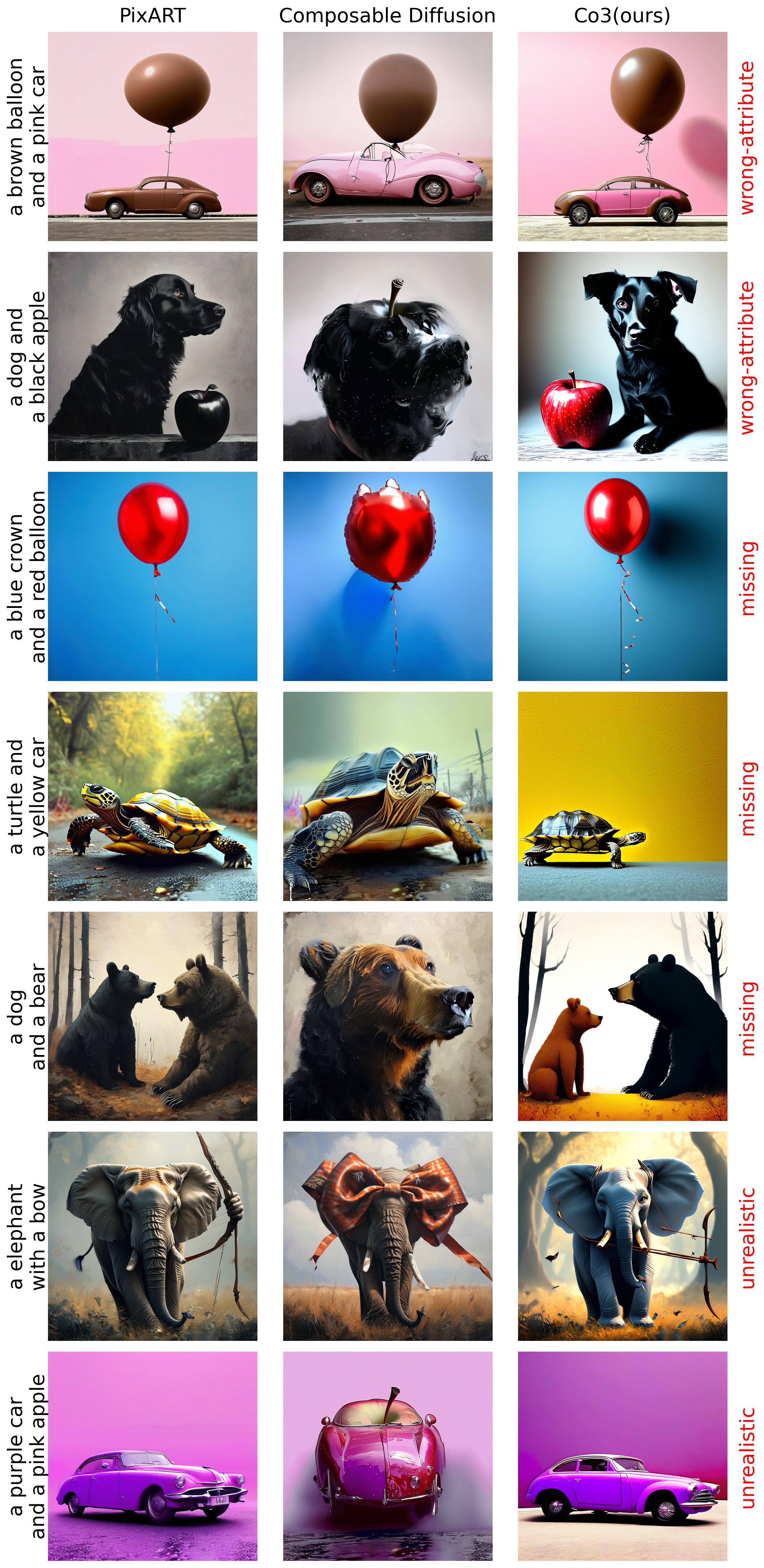} 
    \caption{Failure scenarios of PixART-$\Sigma$ base diffusion model, PixART-$\Sigma$ + {\name}, and PixART-$\Sigma$ + Composable Diffusion. }
    \label{fig:qualitative_failure}
\end{figure*}
\subsection{More Qualitative results on Multi-concept Prompts}
\begin{figure}[htbp]
    \centering
    \includegraphics[width=\textwidth]{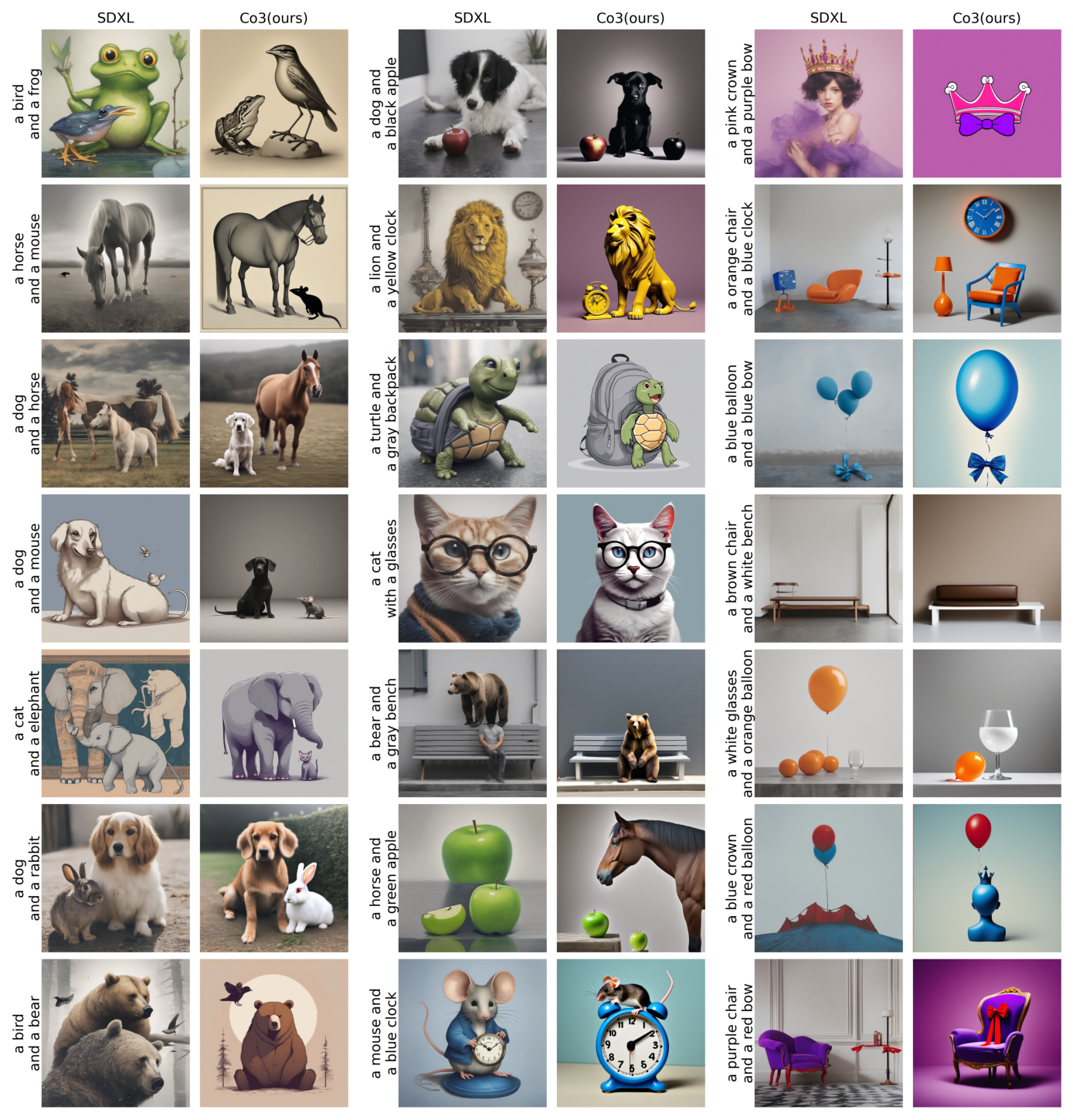} 
    \caption{Qualitative results of SDXL base diffusion model, and SDXL + {\name} for \textit{animal-animal},\textit{animal-object}, and \textit{object-object} categories from Attend-Excite prompts.}
    \label{fig:extra_sdxl_a_o}
\end{figure}
\begin{figure}[htbp]
    \centering
    \includegraphics[width=0.65\textwidth]{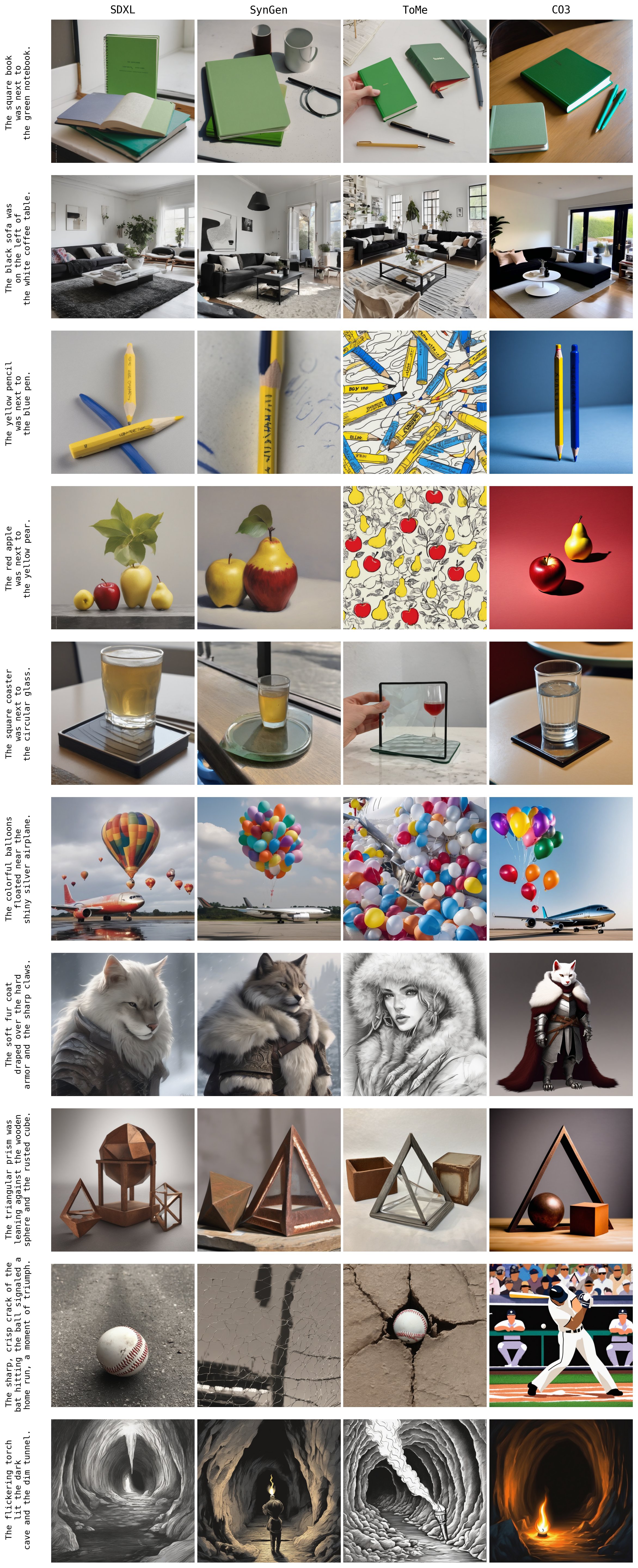} 
    \caption{Qualitative results on Complex category from T2Icompbench}
    \label{fig:t2i_prompts}
\end{figure}
\begin{figure}[htbp]
    \centering
    \includegraphics[width=0.75\textwidth]{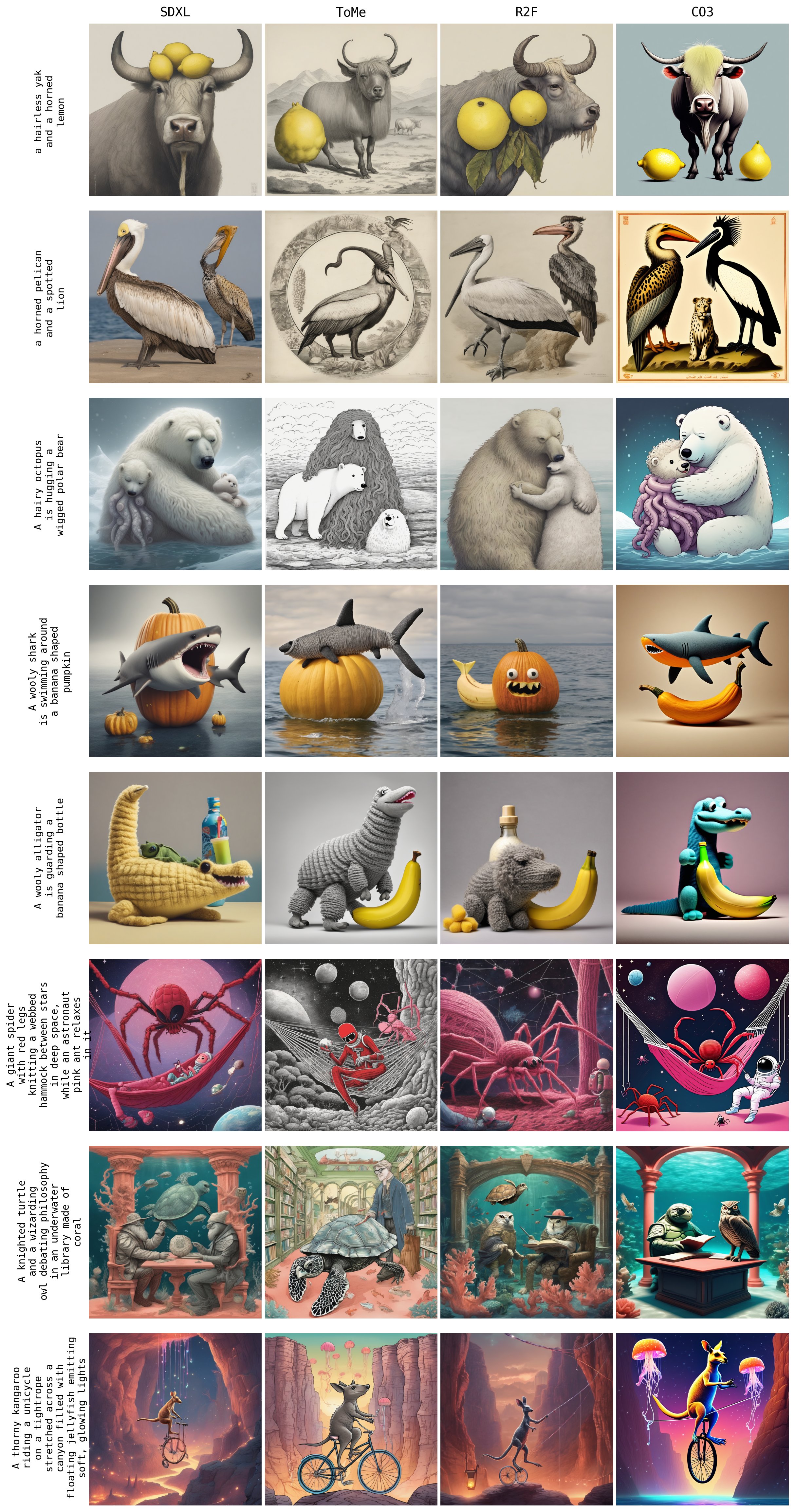} 
    \caption{Qualitative results on RareBench prompts.}
    \label{fig:rarebench_prompts}
\end{figure}
We added more qualitative comparison results on the three categories from Attend-Excite prompts in Figure~\ref{fig:extra_sdxl_a_o}, Complex category from T2Icompnench in Figure~\ref{fig:t2i_prompts}. Additional results on Rarebench prompts are included in Figure~\ref{fig:rarebench_prompts}.
% }

\end{document}